\documentclass[12pt]{article}

\usepackage{amsmath,amssymb,amsthm,mathtools}
\usepackage{setspace,enumitem,soul,url,xcolor,hyperref,bm,pifont,cancel,comment,booktabs}
\usepackage{tikz-cd}

\newtheorem{proposition}{Proposition}
\newtheorem{coro}{Corollary}
\newtheorem{lemma}{Lemma}
\newtheorem{example}{Example}

\usepackage[T1]{fontenc}
\usepackage{tgbonum} \fontfamily{qbk} 

\title{On design-unbiased algorithmic \\ Machine Learning}
\date{}
\author{\normalsize Li-Chun Zhang\footnote{Correspondence: Department of Social Statistics and Demography, Univ. of Southampton, Highfield SO17 1BJ, Southampton, UK. Email: L.Zhang@soton.ac.uk}, University of Southampton \& Statistisk sentralbyrå \\
\normalsize Siu-Ming Tam, Australian Bureau of Statistics \\
\normalsize Luis Sanguiao-Sande, Instituto Nacional de Estadística \\
\normalsize Wesley Yung, Statistics Canada (Ret.) \\
\normalsize Anders Holmberg, Australian Bureau of Statistics
}

\begin{document}

\maketitle

\begin{abstract}
Machine Learning (ML) algorithms, such as k-Nearest Neighbours (kNN) or random forest, eschew the ideal of true data models in favour of predictive performance. However, minimising the MSE or F-score cannot lead to unbiasedness directly, which is important in many situations such as official statistics. We study the conditions of algorithmic ML, other than the existence and knowledge of true data models, which lead to unbiased prediction or classification for a given finite population, including how the training data may be sampled from the population, how a trained prediction algorithm can be tuned to achieve unbiased prediction or classification for that population, and how the performance of out-of-sample prediction or classification can be assessed unbiasedly. The inference is based on the \emph{known} probability design of samples and training sets, rather than any \emph{assumed} distributions or models. 
\end{abstract}

\noindent
\textbf{Key words} $pq$-design, prediction estimator, debiasing, classification accuracy

\section{Introduction}

Breiman (2001a) contrasts algorithmic and data modelling cultures, where the ideal of true data models is eschewed in favour of predictive performance of ML algorithms,  such as kNN or random forest. However, minimising the MSE or F-score cannot directly lead to unbiasedness of prediction or classification, which is important in many finite-population applications of ML. 

For instance, many official statistics are defined as descriptive summaries of a country's population, economy, society or environment. While the quality of official statistics is multi-dimensional (e.g. United Nations, 2019; European Commission, 2017; Statistics Canada, 2017), accuracy and reliability as measured in terms of bias and variance, respectively, are central in all the quality frameworks necessary to maintain the public trust. 
 
As a common approach to producing official statistics, survey sampling shares Breiman's perspective on ML algorithms. The focus is to improve the sampling strategy, consisting of a probability sampling design and an associated choice of estimator (Neyman, 1934). The inference is with respect to the known sampling design, regardless of the existence or knowledge of a true data model. See Hansen (1987), Smith (1994), Kalton (2002), Rao (2005, 2011), Beaumont and Haziza (2022) for reviews and appraisals.

The well-known Horvitz-Thompson estimator (Horvitz and Thompson, 1952) is the most typical example, which is exactly unbiased over repeated sampling from a given population. To improve the efficiency by leveraging auxiliary information about the population, i.e. in addition to the sampling design, it is popular to adopt a design-based model-assisted approach (e.g. Särndal et al., 1992), whereby a prediction model relating the target outcome to the known covariates can be introduced to adjust the Horvitz-Thompson estimator, but the inference of bias and variance is still with respect to the sampling design, without the need for the assisting model to be the true data model. 

Breidt and Opsomer (2017) summarise a general ``recipe'' of model-assisted estimation, which adjusts the model-prediction of the population total by the weighted sum of sample prediction residuals, where the weights are inverse sample inclusion probabilities as in the Horvitz-Thompson estimator. 

In case the assisting model is pre-trained, i.e. not learned on the actual sample, the resulting difference estimator (e.g. Särndal et al., 1992, Sec. 6.3) is exactly design-unbiased. See also Angelopoulos et al. (2023) for a related approach to independent and identically distributed (IID) samples, either by stipulation or by with-replacement sampling from finite populations.
 
Otherwise, and more common in practice, estimating the model means that the resulting estimator is no longer exactly unbiased. To justify any given ML algorithm in this context, some authors resort to the notation of consistency asymptotically, as the population and sample sizes increase to infinity, under suitable regularity conditions. See McConville and Toth (2019) for regression trees generated by the recursive partitioning algorithm, or Dagdoug et al. (2023) for random forest by the original algorithm of Breiman (2001b).  

However, as Smith (1994) points out, the ``asymptotic notion of consistency'' is not immediately applicable to the given population as ``a real entity''. One may observe an alternative notion of consistency following the works of Fisher (1956) and Neyman (1934). For a given population and sampling method, if $\hat{\theta}$ is unbiased for the vector of population totals $\theta$, then $g(\hat{\theta})$ is called ``consistent'' for $g(\theta)$ by Fisher (1956); whereas an interval estimator of a population statistic is called ``consistent'' by Neyman (1934), if it achieves the designated level of coverage. Zhang et al. (2025) refer to such finite-sample design-unbiased estimators as \emph{Neyman-Fisher consistent}.

Sanguiao-Sande and Zhang (2021) propose an approach to Neyman-Fisher consistent population total estimation, which is exactly design-unbiased in the finite-sample setting. In particular, given any ML algorithm as the assisting model, one can apply Rao-Blackwellisation (Rao, 1945; Blackwell, 1947) to the Horvitz-Thompson estimator of the total prediction errors in the population outside the training set to achieve design-unbiasedness. The resulting estimator still uses the sampling design weights explicitly, like all other traditional design-based model-assisted estimators.

Zhang et al. (2025) extend the \emph{subsampling Rao-Blackwell (SRB)} technique of Sanguiao-Sande and Zhang (2021) to a larger class of estimators, called the \emph{prediction estimators} (Royall, 1970; Valliant et al., 2000), where a prediction estimator of a population total is the sum of the observed sample total and a predicted out-of-sample total. Since one can plug in any ML algorithm for the latter, the prediction estimator can be constructed by a working model of the population without using the sampling weights at all, although doing so would often cause bias over repeated sampling from the given population. Applying Rao-Blackwellisation to the subsample-trained prediction estimator, Zhang et al. (2025) obtain exactly design-unbiased estimators of the bias and MSE of the resulting SRB-prediction estimators.  

However, Zhang et al. (2025) did not study the general conditions of ML that can lead to design-unbiased prediction estimators. Nor did they consider the estimation of classification accuracy when the prediction estimator is given by unit-level classifications of the out-of-sample units. 

In this paper, we shall focus on the conditions of ML under finite population sampling, which would yield design-unbiased estimation of population totals, either by predicting or classifying the out-of-sample units. Since the inference is design-based, the unbiasedness holds regardless of the `truthfulness' of the assisting model or ML algorithm. This is especially useful in applications of ML, where unbiasedness for finite populations is of critical importance. 

The specific questions we consider are: (i) how the training data may be sampled from the population, so as to enable the inference of out-of-sample prediction errors based on the observed in-sample test errors, (ii) how an ML algorithm, which is formed by the training data, can be tuned by the test errors to yield design-unbiased out-of-sample prediction, and (iii) how out-of-sample unit-level classification can yield design-unbiased prediction estimation, and how to assess the associated classification accuracy unbiasedly.    

In the rest of the paper, Sections \ref{sec:prediction} and \ref{sec:classification} deal with unbiased prediction and classification, respectively. Sections \ref{sec:kNN} and \ref{sec:application} present illustrations, simulations and an application of the theory developed, using kNN as the example ML algorithm throughout. Section \ref{sec:final} gives concluding remarks.

\section{Design-unbiased prediction} \label{sec:prediction}

Denote by $U = \{ 1, ..., N\}$ a given finite population, with known feature vector $x_i$ for each $i\in U$. Denote by $s$ the sample of $n$ units with observed \emph{outcome} $y_i$ for each $i\in s$, whereas $y_i$ is unknown for the rest of the population. We treat all the values $\{ (y_i, x_i) : i \in U\}$ as constants, whether they are known or not. The variation of ML and prediction is due to the following two elements. 

First, let the sample $s$ be selected from $U$ by a probability sampling design, to be referred to as the \emph{$p$-design} and denoted by 
\[
s \sim p(s) 
\]
where $p(s)$ sums to 1 over all possible $s$, and $\pi_i = \Pr(i\in s) >0$ for any $i\in U$. Next, for algorithmic ML, let $s_1$ be the \emph{training} set taken from $s$, and let $s_2 = s\setminus s_1$ be the corresponding \emph{test} set, which are created according to a specific design, to be referred to as the \emph{$q$-design} and denoted by
\[
s_1 \sim q(s_1 \mid s). 
\]
For instance, $s_1$ of size $n_1$ may be selected from $s$ by simple random sampling without replacement --- simply SRS from now on. Or, $s_1$ may be created by bootstrap of $s$, in which case a given unit in $s$ may be selected multiple times in $s_1$, and $s_2$ contains the units in $s$ not selected to $s_1$ at all. Or, by $L$-folding, one would create $L$ test sets $s_2$ by a random $L$-partition of $s$, and the $L$ training sets $s_1 = s\setminus s_2$ are determined accordingly. 

In any case, we consider the $pq$-design to be well-defined for inference, if the joint distribution of $(s_1, s)$ admits the factorisation
\begin{equation} \label{pq}
q(s_1\mid s) p(s) = f(s_1) f(s\mid s_1)
\end{equation}
such that the non-empty test set $s_2$ can be regarded as a probability sample from $U\setminus s_1$ conditional on $s_1$. Notice that the trivial $q$-design, $\Pr(s_1 = s \mid s) = 1$, does not yield a well-defined $pq$-design since $s_2 \equiv \emptyset$, although it is feasible for ML with default setups, such as kNN with pre-fixed $k$. Well-defined $pq$-designs, however, are necessary for the inference of ML. 
 
As a typical target of interest, let us consider the population $y$-total, which can be decomposed into the observed sample total and the unknown total in the rest of the population, denoted by
\[
Y = \sum_{i\in U} y_i = \sum_{i\in s} y_i + \sum_{i\notin s} y_i = \sum_{i\in s} y_i + Y_R
\] 
where the subscript denotes $R =U\setminus s$. Let a \emph{prediction estimator} of $Y$ be
\[
\hat{Y} = \sum_{i\in s} y_i + \sum_{i\notin s} \hat{y}_i = \sum_{i\in s} y_i + \hat{Y}_R
\]
where $\hat{y}_i$ can be given by any ML prediction or classification algorithm.

In particular, as the out-of-sample total $Y_R$ varies with $s$ under repeated sampling $s\sim p(s)$, unbiased estimation of $Y$ (as a constant) is equivalent to unbiased prediction of $Y_R$ (as a random variable), denoted by
\[
E_p(\hat{Y}) = Y \quad\Leftrightarrow\quad E_p(\hat{Y}_R - Y_R) = 0~.
\]

\subsection{Representative training}

Denote by $\mu(x, s_1)$ a predictor obtained from the training data $\{ (y_j, x_j : j\in s_1\}$, which is aimed at the outcome $y$-value given the associated feature vector $x$. The notation signifies that, for any unit outside the training set, $i\notin s_1$ with $x_i = x$, its predicted $y$-value $\mu(x, s_1)$ varies only with $x$ and $s_1$. 

\paragraph{Definition} A well-defined $pq$-design is said to yield \emph{representative training} of $\mu(x, s_1)$ if, $\forall i\in U$, we have
\begin{equation} \label{RT}
E_{pq}\big( \mu(x_i, s_1) \mid i\in s_2\big) = E_{pq}\big( \mu(x_i, s_1) \mid i\notin s\big) .
\end{equation}

In other words, representative training is the case, by the given $pq$-design, if the expected predictor $\mu(x_i, s_1)$ is the same for any unit $i\in U$ conditional on its being outside the training set, regardless whether the unit needs to be predicted (i.e. $i\notin s$) or is observed and may be used for inference (i.e. $i\in s_2$). For simplicity, we shall say a unit $i$ is \emph{out-of-bag (OOB)} if it is in the sample but outside the training set, i.e. $i\in s_2$, whereas it is out-of-sample if $i\notin s$.

The concept of representative training \eqref{RT} is intuitive, since it allows one to connect the unobserved out-of-sample performance of $\mu(x, s_1)$ to its observed in-sample OOB performance. For instance, conditional on $s_1$, all the prediction errors $\mu(x_i, s_1) - y_i$ can be partitioned according to $U\setminus s_1 = s_2 \cup R$, where $s_2$ is a sample from $U\setminus s_1$ with respect to the $pq$-design, which would allow us to make inference of the prediction bias of $\hat{Y}_R = \sum_{i\in R} \mu(x_i, s_1)$ later.

\begin{lemma} \label{RT:lemma}
A well-defined $pq$-design yields representative training, 
\begin{itemize}[leftmargin=6mm,itemsep=0pt] \setstretch{1.0}
\item for all possible ML algorithms $\mu(x,s_1)$ if and only if, $\forall i\in U$,  we have
\begin{equation} \label{RT:all}
\pi_{2i} \coloneq \Pr(i\in  s_2 \mid s_1) = \frac{\pi_i - \Pr(i\in s_1)}{1 - \Pr(i\in s_1)} \mathbb{I}(i\notin s_1)
\end{equation}
\item for any given ML algorithm $\mu(x,s_1)$ if and only if, $\forall i\in U$,  we have
\begin{equation} \label{RT:given}
\text{Cov}_{s_1}\big( \mu(x_i, s_1), \pi_{2 i} \mid i \notin s_1 \big) = 0 .
\end{equation}
\end{itemize}
\end{lemma}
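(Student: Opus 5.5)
Fix a unit $i\in U$. Since both conditional events in \eqref{RT} imply $i\notin s_1$, we can reduce each side to an expectation over the marginal law $f(s_1)$ of the training set, restricted to $\{i\notin s_1\}$. By the factorisation \eqref{pq}, conditional on $s_1$ the indicator $\mathbb{I}(i\in s_2)$ has conditional mean $\pi_{2i}=\Pr(i\in s_2\mid s_1)$, and $\mathbb{I}(i\notin s)=\mathbb{I}(i\notin s_1)-\mathbb{I}(i\in s_2)$. Writing $\mu_i=\mu(x_i,s_1)$, which is a function of $s_1$ only, we get
\[
E_{pq}\big(\mu_i \mid i\in s_2\big)=\frac{E_{s_1}\big(\mu_i\,\pi_{2i}\mid i\notin s_1\big)}{E_{s_1}\big(\pi_{2i}\mid i\notin s_1\big)},\qquad
E_{pq}\big(\mu_i \mid i\notin s\big)=\frac{E_{s_1}\big(\mu_i\,(1-\pi_{2i})\mid i\notin s_1\big)}{E_{s_1}\big(1-\pi_{2i}\mid i\notin s_1\big)} .
\]
Here $\Pr(i\in s_2)=\pi_i-\Pr(i\in s_1)$ and $\Pr(i\notin s)=1-\pi_i$, so the denominators are $(\pi_i-\Pr(i\in s_1))/(1-\Pr(i\in s_1))$ and its complement. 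For bootstrap $q$-designs, $s_1$ should be read as its set of distinct units, so that $\pi_{2i}$ is still defined on $\{i\notin s_1\}$.

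Next we show the second bullet. Write $a=E(\mu_i\pi_{2i}\mid i\notin s_1)$, $b=E(\pi_{2i}\mid i\notin s_1)$ and $m=E(\mu_i\mid i\notin s_1)$. The second ratio is then $(m-a)/(1-b)$. Equating $a/b=(m-a)/(1-b)$ gives $a(1-b)=b(m-a)$, that is, $a=bm$. This is exactly $\mathrm{Cov}_{s_1}(\mu_i,\pi_{2i}\mid i\notin s_1)=0$, which is \eqref{RT:given}. The edge cases need separate treatment: $b=0$ (units never OOB, where the left side of \eqref{RT} is undefined) and $b=1$ (where $\pi_i=1$). The intended convention presumably excludes them or treats them vacuously, and I would note this explicitly. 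Since every step is an equivalence, this proves the second bullet.

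For the first bullet, sufficiency is immediate. If \eqref{RT:all} holds, then $\pi_{2i}$ is constant on $\{i\notin s_1\}$, so its covariance with any $\mu_i$ vanishes. Necessity is the main point. Requiring \eqref{RT:given} for all possible algorithms means $\pi_{2i}$ must be uncorrelated, given $i\notin s_1$, with every function of $s_1$. The key observation is that $\mu(x_i,s_1)$ may be an arbitrary function of $s_1$: an ML algorithm can depend on the training set in any way, for instance through indicators of particular training sets. Taking $\mu(x_i,s_1)=\pi_{2i}(s_1)$ itself gives $\mathrm{Var}(\pi_{2i}\mid i\notin s_1)=0$. Hence $\pi_{2i}$ is almost surely constant on $\{i\notin s_1\}$, and its value must be $b=(\pi_i-\Pr(i\in s_1))/(1-\Pr(i\in s_1))$. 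On $\{i\in s_1\}$ we trivially have $\pi_{2i}=0$, and this yields the indicator factor in \eqref{RT:all}.

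The delicate step is justifying that "all possible ML algorithms" includes such arbitrary functions of $s_1$, since $\mu$ depends on $s_1$ only through the training data $\{(y_j,x_j):j\in s_1\}$. If distinct training sets could yield identical data, the construction might fail. I would handle this by allowing $\mu$ to depend on the unit labels, or equivalently by assuming the features $x_j$ distinguish units, so that $s_1\mapsto\mu$ can be any measurable function.
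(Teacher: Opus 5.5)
Your proposal is correct and is essentially the paper's proof: both rewrite \eqref{RT} as sums over training sets $s_1\not\ni i$ weighted by $f(s_1)\pi_{2i}(s_1)$ versus $f(s_1)\{1-\pi_{2i}(s_1)\}$, and your $a=bm$ is exactly the paper's rescaled identity $E_{s_1}(\mu\,\pi_{2i}\mid i\notin s_1)=E_{s_1}(\mu\mid i\notin s_1)\,E_{s_1}(\pi_{2i}\mid i\notin s_1)$. There are three small differences: you get necessity in the first bullet by plugging $\mu=\pi_{2i}$ into \eqref{RT:given} to force zero conditional variance, where the paper reads the pointwise identity off the arbitrariness of $\mu$; you make explicit the assumption both arguments need, that $\mu(x_i,\cdot)$ may be any function of $s_1$; and your bootstrap aside is unnecessary, since collapsing $s_1$ to its distinct units could even break the step $E_{pq}\{\mu_i\mathbb{I}(i\in s_2)\}=E_{s_1}(\mu_i\pi_{2i})$ when $\mu$ depends on multiplicities.
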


The proof is given in Appendix \ref{app:proof}, as are the proofs of other results in the paper. Since condition \eqref{RT:all} implies \eqref{RT:given}, the latter may be regarded as a more general condition for representative training: it means that \emph{any} well-defined $pq$-design may yield representative training of certain models, i.e. those satisfying \eqref{RT:given}. However, one may be unable to verify the condition \eqref{RT:given}, which requires all the relevant unconditional and conditional sampling probabilities to be known. The condition \eqref{RT:all} is thus more readily applicable, given which the $pq$-design guarantees representative training of \emph{all} possible ML algorithms.  

\begin{coro} \label{SRS-SRS} 
Given $s$ by SRS from $U$ and $s_1$ by SRS from $s$, the SRS-SRS $pq$-design satisfies \eqref{RT:all}. 
\end{coro}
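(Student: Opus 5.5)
Under the SRS-SRS $pq$-design, $s$ is an SRS of size $n$ from $U$ and $s_1$ is an SRS of size $n_1<n$ from $s$. We verify \eqref{RT:all} by computing both sides explicitly. The marginal probabilities are immediate: $\pi_i = n/N$, and $\Pr(i\in s_1) = \Pr(i\in s)\Pr(i\in s_1\mid i\in s) = (n/N)(n_1/n) = n_1/N$. The right-hand side of \eqref{RT:all} therefore equals
\[
\frac{n/N - n_1/N}{1 - n_1/N}\,\mathbb{I}(i\notin s_1) = \frac{n-n_1}{N-n_1}\,\mathbb{I}(i\notin s_1).
\]

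The main step is the conditional law of $s_2$ given $s_1$, which we obtain from the factorisation \eqref{pq}. For any fixed subsets $s_1\subset s$ with $|s_1|=n_1$ and $|s|=n$,
\[
q(s_1\mid s)\,p(s) = \binom{n}{n_1}^{-1}\binom{N}{n}^{-1}.
\]
This does not depend on which particular sets are involved. Summing over the $\binom{N-n_1}{n-n_1}$ supersets $s\supset s_1$ gives $f(s_1) = \binom{N}{n_1}^{-1}$, so $s_1$ is marginally an SRS of size $n_1$ from $U$. Dividing, we get
\[
f(s\mid s_1) = \binom{N-n_1}{n-n_1}^{-1}.
\]
Hence, conditional on $s_1$, the set $s_2 = s\setminus s_1$ is an SRS of size $n-n_1$ from $U\setminus s_1$. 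The identity $\binom{N}{n}\binom{n}{n_1} = \binom{N}{n_1}\binom{N-n_1}{n-n_1}$ confirms that these pieces are consistent. Since $n_1<n$, the test set is non-empty, so the $pq$-design is well-defined.

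It then follows that $\Pr(i\in s_2\mid s_1) = (n-n_1)/(N-n_1)$ for $i\notin s_1$, and $0$ for $i\in s_1$, because $s_2$ and $s_1$ are disjoint. This matches the right-hand side above for every $i\in U$, which establishes \eqref{RT:all}. By Lemma~\ref{RT:lemma}, representative training then holds for all ML algorithms $\mu(x,s_1)$.

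The only real care point is the factorisation step. We must make sure that $f(s\mid s_1)$ is uniform over all supersets of $s_1$ of size $n$, which relies on the joint probability being constant across all nested pairs. The remaining steps are routine algebra.
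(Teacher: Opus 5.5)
Your proof is correct and follows the same route as the paper, which simply asserts that $\Pr(i\in s_2\mid s_1)=\frac{n-n_1}{N-n_1}\mathbb{I}(i\notin s_1)$ under the SRS-SRS design and matches it against $\pi_i=n/N$ and $\Pr(i\in s_1)=n_1/N$. Your derivation of $f(s_1)$ and $f(s\mid s_1)$ from the factorisation \eqref{pq} fills in the step the paper leaves implicit.
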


\begin{example}
Let $U = \{ i_1, i_2, i_3, i_4\}$, and $(n, n_1) = (2, 1)$ for the SRS-SRS $pq$-design. We have $\pi_{2i} = \frac{n-n_1}{N-n_1} = \frac{1}{3}$, $\forall i\in U$, which satisfies \eqref{RT:all} since $\pi_i = \frac{2}{4}$ and $\Pr(i\in s_1) = \frac{1}{4}$.  
\begin{itemize}[leftmargin=6mm]
\item Given $i_1\in s_2$, we have $s = \{ i_1, i_2\}, \{ i_1, i_3\}, \{ i_1, i_4\}$, yielding 3 distinct training sets $s_1 = \{ i_2\}, \{ i_3\}, \{ i_4\}$ without $i_1$, each with probability $f(s_1 \mid i_1\in s_2) = \frac{1}{3}$.
\item Given $i_1\notin s$, we have $s = \{ i_2, i_3\}, \{ i_2, i_4\}, \{ i_3, i_4\}$, yielding 3 distinct training sets $s_1 = \{ i_2\}, \{ i_3\}, \{ i_4\}$, each with probability $f(s_1 \mid i_1\notin s) = \frac{1}{3}$. Notice that the same training set can occur in different $s$ without $i_1$, such as $s_1 = \{ i_2\}$ in $s = \{ i_2, i_3\}$ or $s = \{ i_2, i_4\}$, but the probability of $s_1 = \{ i_2\}$ is still $\frac{2}{6} = \frac{1}{3}$.
\end{itemize} 
It is clear that, given any $\mu$ trainable on $s_1$, the expectation of $\mu(x_i, s_1)$ is the same, either conditional on $i\in s_2$ or $i\notin s$, for any $i\in U$, i.e. representative training. 
\end{example}

Apart from SRS, it is common to generate $(s_1, s_2)$ by $L$-folding or bootstrap sampling. For $L$-folding, suppose (i) $n_2 = n/L$ is an integer given the sample size $n$ of $s$, and (ii) the $L$ sets of $s_2$ are drawn sequentially from $s$ by SRS. We would then have the same $f(s_1)$ by SRS-$L$-folding as by the SRS-SRS $pq$-design, and representative training of all possible $\mu(x, s_1)$; whereas, without (i) and (ii), representative training can be the case approximately.  

Next, when $(s_1, s_2)$ are given by bootstrap sampling of $s$, the probability $f(s_1)$ would depend on the number of distinct units in $s_1$, denoted by $m$, as well as their realised frequencies in $s_1$. The expected frequency is the same for each distinct unit, and it is well known that $m/n$ tends to $1- e^{-1}$ as $n$ increases. By the symmetry of this SRS-bootstrap $pq$-design, $\pi_{2i} = \Pr(i\in s_2 \mid s_1)$ is approximately a constant for $i\notin s_1$, given sufficiently large $n$, and we have approximately representative training for all possible  $\mu(x, s_1)$. 

\begin{coro} \label{Pois-Ber}
Let $s$ be given by Poisson sampling from $U$, and $s_1$ by Bernoulli sampling from $s$. The Poisson-Bernoulli $pq$-design satisfies \eqref{RT:all}. 
\end{coro}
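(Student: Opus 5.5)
Poisson sampling of $s$ with inclusion probabilities $\pi_i$ makes $\mathbb{I}(i\in s)$, $i\in U$, independent Bernoulli$(\pi_i)$ variables. Bernoulli sampling of $s_1$ from $s$ with a common retention probability $\theta$ adds, for each $i\in s$, an independent Bernoulli$(\theta)$ indicator of $i\in s_1$. The plan is to show that the pairs $\big(\mathbb{I}(i\in s_1), \mathbb{I}(i\in s_2)\big)$ are independent across $i\in U$. Then conditioning on $s_1$ only affects the unit's own status, and \eqref{RT:all} reduces to a one-unit calculation.

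First, write each unit's status as a three-valued variable taking the values $s_1$, $s_2$ or $R$, with probabilities $\pi_i\theta$, $\pi_i(1-\theta)$ and $1-\pi_i$. Both the Poisson and the Bernoulli draws are made independently unit by unit, so the joint law is
\[
q(s_1\mid s)p(s) = \prod_{i\in s_1}\pi_i\theta \prod_{i\in s_2}\pi_i(1-\theta)\prod_{i\notin s}(1-\pi_i).
\]
This is a product over units, so it factorises as $f(s_1)f(s\mid s_1)$ with
\[
f(s_1)=\prod_{i\in s_1}\pi_i\theta\prod_{i\notin s_1}(1-\pi_i\theta).
\]
The conditional law $f(s\mid s_1)$ then says that each $i\notin s_1$ enters $s_2$ independently with probability
\[
\frac{\pi_i(1-\theta)}{1-\pi_i\theta}.
\]
So $s_2$ given $s_1$ is itself a Poisson sample from $U\setminus s_1$, and the $pq$-design is well-defined in the sense of \eqref{pq}. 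One caveat: $s_2$ may be empty with positive probability. I would note this and either assume $\theta<1$, so that every unit has positive probability of being OOB, or condition on non-emptiness as a minor technicality.

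Second, verify \eqref{RT:all}. We have $\Pr(i\in s_1)=\pi_i\theta$, hence
\[
\frac{\pi_i-\Pr(i\in s_1)}{1-\Pr(i\in s_1)}=\frac{\pi_i(1-\theta)}{1-\pi_i\theta},
\]
which is exactly the conditional probability found above for $i\notin s_1$. For $i\in s_1$, $\pi_{2i}=0$, which matches the indicator factor in \eqref{RT:all}. Lemma \ref{RT:lemma} then gives representative training for all $\mu(x,s_1)$.

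The only step needing care is the factorisation. One must keep the Bernoulli rate $\theta$ common to all units, or at least a fixed unit-specific constant not depending on $s$, so that per-unit independence survives conditioning on $s_1$. No real obstacle is expected beyond this bookkeeping.
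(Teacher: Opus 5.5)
Your proof is correct and follows the same route as the paper's. The paper simply records $f(s_1)=\prod_{j\in s_1}\lambda\pi_j\prod_{l\notin s_1}(1-\lambda\pi_l)$, $\Pr(i\in s_1)=\lambda\pi_i$ and $\Pr(i\in s_2\mid s_1)=\frac{\pi_i-\lambda\pi_i}{1-\lambda\pi_i}\mathbb{I}(i\notin s_1)$; your product form of the joint law usefully justifies these formulas and the factorisation \eqref{pq}. One caution on handling an empty $s_2$: do not condition on $s_2\neq\emptyset$, since that destroys the unit-wise independence and makes $\Pr(i\in s_2\mid s_1)$ depend on the rest of $s_1$ (for example, it equals $1$ when $i$ is the only unit outside $s_1$), so \eqref{RT:all} would no longer hold exactly; the paper's proof leaves that event in the design.
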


Corollary \ref{Pois-Ber} suggests that, for an unequal-probability $p$-design with fixed sample size $n$, a negligible sampling fraction $n/N$ can yield approximately representative training as long as one adopts an equal-probability $q$-design.

\subsection{Prediction unbiasedness and tuning}

The set of out-of-sample units $R = U\setminus s$ varies with $s\sim p(s)$. The predictor $\mu(x, s_1)$ obtained from $s_1$ is $pq$-unbiased for out-of-sample prediction, if 
\begin{equation} \label{unbias-pq}
E_{pq}\Big( \sum_{i\in R} \mu(x_i, s_1) \Big) = E_p\Big( \sum_{i\in R} y_i \Big)
\end{equation}  
over all the out-of-sample units. We leave it to future studies to investigate if design-unbiased prediction of unit-specific $y_i$ can be achieved sensibly. 

Below we provide a sufficient condition for \eqref{unbias-pq} given representative training by $pq$-design. Consequently, we show how it can be used to \emph{tune} the given $\mu(x, s_1)$ to achieve unbiased prediction.

\begin{proposition} \label{OOB}
Given representative training of $\mu(x, s_1)$ by the $pq$-design, it is $pq$-unbiased for out-of-sample prediction \eqref{unbias-pq} if, for any $s\sim p(s)$, we have 
\begin{equation} \label{OOB-q}
\sum_{i\in s} \big(\tfrac{1}{\pi_i} -1\big) E_q\big( \mu(x_i, s_1) \mid i \in s_2\big) =  \sum_{i\in s} \big( \tfrac{1}{\pi_i} -1 \big) y_i .
\end{equation}
\end{proposition}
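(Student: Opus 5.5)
The plan is to write both sides of \eqref{unbias-pq} as sums over the fixed population $U$ using inclusion indicators. Then representative training \eqref{RT} converts the out-of-sample side into an OOB side, and taking $E_p$ of \eqref{OOB-q} identifies that OOB side with the $y$-side. Everything is a finite sum of constants weighted by design probabilities, so no model assumptions enter.

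\textbf{Right-hand side.} Since $R = U\setminus s$, we have
\[
E_p\Big(\sum_{i\in R} y_i\Big) = \sum_{i\in U} (1-\pi_i)\, y_i .
\]
The right-hand side of \eqref{OOB-q} has exactly this expectation, because
\[
E_p\Big(\sum_{i\in s} \big(\tfrac{1}{\pi_i}-1\big) y_i\Big) = \sum_{i\in U} \pi_i \big(\tfrac{1}{\pi_i}-1\big) y_i = \sum_{i\in U} (1-\pi_i)\, y_i .
\]

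\textbf{Left-hand side.} Conditioning on the event $i\notin s$ for each unit gives
\[
E_{pq}\Big(\sum_{i\in R} \mu(x_i,s_1)\Big) = \sum_{i\in U} (1-\pi_i)\, E_{pq}\big(\mu(x_i,s_1)\mid i\notin s\big) .
\]
Representative training \eqref{RT} replaces each conditional expectation by $E_{pq}(\mu(x_i,s_1)\mid i\in s_2)$. So it remains to show
\[
E_p\Big(\sum_{i\in s}\big(\tfrac{1}{\pi_i}-1\big) E_q\big(\mu(x_i,s_1)\mid i\in s_2\big)\Big) = \sum_{i\in U} (1-\pi_i)\, E_{pq}\big(\mu(x_i,s_1)\mid i\in s_2\big).
\]
Once this holds, taking $E_p$ of both sides of \eqref{OOB-q}, which is assumed for every $s$, closes the chain of equalities.

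\textbf{The main obstacle.} Expanding the $E_p$ on the left with indicators gives
\[
\sum_{i\in U}(1-\pi_i)\, E_p\big[E_q(\mu(x_i,s_1)\mid i\in s_2, s)\mid i\in s\big].
\]
The inner average weights samples by $p(s\mid i\in s)$. By contrast, $E_{pq}(\cdot\mid i\in s_2)$ weights them by $p(s)\,q(i\in s_2\mid s)/\Pr(i\in s_2)$. The two coincide when $\Pr_q(i\in s_2\mid s)$ does not depend on $s$ among samples containing $i$. This is the case for SRS or Bernoulli $q$-designs with a fixed test fraction, which are the designs of Corollaries \ref{SRS-SRS} and \ref{Pois-Ber}.

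I would verify this invariance from the well-definedness of the $pq$-design \eqref{pq}. Where it is not automatic, I would read $E_q(\cdot\mid i\in s_2)$ in \eqref{OOB-q} as the version reweighted by $q(i\in s_2\mid s)$, so that iterated expectation gives exactly $E_{pq}(\cdot\mid i\in s_2)$. With either route, the tower property finishes the argument.
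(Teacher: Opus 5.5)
You follow the paper's own route. The right-hand side goes through $E_p\big(\sum_{i\in s}(\tfrac{1}{\pi_i}-1)y_i\big)=\sum_{i\in U}(1-\pi_i)y_i$. On the left-hand side you expand with $\mathbb{I}(i\in s)$ and use \eqref{RT} to swap $E_{pq}(\cdot\mid i\in s_2)$ for $E_{pq}(\cdot\mid i\notin s)$.

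The obstacle you isolate is real, and it is exactly the step the paper's proof passes over. That proof identifies $E_p\big[E_q(\mu(x_i,s_1)\mid i\in s_2, s)\mid i\in s\big]$ with $E_{pq}(\mu(x_i,s_1)\mid i\in s_2)$. This is valid only when $q(i\in s_2\mid s)$ is the same for all $s\ni i$. The paper tacitly assumes this, as its use of $q_{2i}=\Pr(i\in s_2\mid i\in s)$ as a constant in Proposition~\ref{prop:OOB-tuned} shows.

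Where your plan fails is the first route: the invariance cannot be derived from \eqref{pq}. That factorisation holds for any joint law of $(s_1,s)$ by Bayes' rule. The only extra content of well-definedness is that $s_2$ is a non-empty probability sample from $U\setminus s_1$, which says nothing about how $q(i\in s_2\mid s)$ varies with $s$.

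In fact the statement is false without the invariance. Take $U=\{1,2,3\}$ and SRS with $n=2$, so every weight $\tfrac{1}{\pi_i}-1=\tfrac12$. Let $|s_1|=1$, with $q(\{1\}\mid\{1,2\})=0.2$, $q(\{1\}\mid\{1,3\})=0.6$ and $q(\{2\}\mid\{2,3\})=\tfrac23$. Then $q(1\in s_2\mid\{1,2\})=0.8\neq 0.4=q(1\in s_2\mid\{1,3\})$. Let $y=(1,0,0)$, $\mu(x_1,\{2\})=0$, $\mu(x_1,\{3\})=2$, $\mu(x_2,\cdot)\equiv 1$ and $\mu(x_3,\cdot)\equiv -1$.
\begin{itemize}
\item \eqref{RT} holds: for unit $1$ both sides equal $\tfrac23$, and units $2$ and $3$ are trivial because their predictions are constant.
\item \eqref{OOB-q} holds for each of the three samples, with both sides equal to $\tfrac12$, $\tfrac12$ and $0$ respectively.
\item Yet $E_{pq}\big(\sum_{i\in R}\mu(x_i,s_1)\big)=\tfrac29\neq\tfrac13=E_p\big(\sum_{i\in R}y_i\big)$.
\end{itemize}

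You therefore have two options. You can add the invariance of $q(i\in s_2\mid s)$ over $s\ni i$ as an explicit hypothesis; it holds for the SRS-SRS, Poisson--Bernoulli, bootstrap and fixed-size $L$-fold designs. Alternatively, you can adopt your reweighted reading of $E_q(\cdot\mid i\in s_2)$, which proves a modified statement. With either choice, your tower-property argument is complete.
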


\begin{coro} Under the SRS-SRS $pq$-design, the predictor $\mu(x, s_1)$ is $pq$-unbiased for out-of-sample prediction if, for any $s\sim p(s)$, we have
\begin{equation} \label{SRS-q}
\sum_{i\in s} E_q\big( \mu(x_i, s_1) \mid i \in s_2\big) = \sum_{i\in s} y_i .
\end{equation}
\end{coro}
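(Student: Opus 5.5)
This corollary follows by specialising Proposition~\ref{OOB} to the SRS-SRS $pq$-design, so the plan is to check that design's hypotheses and then simplify condition \eqref{OOB-q}.

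First, Corollary~\ref{SRS-SRS} says the SRS-SRS $pq$-design satisfies \eqref{RT:all}. By Lemma~\ref{RT:lemma}, this gives representative training for every ML algorithm $\mu(x,s_1)$, and in particular for the given one. Hence the hypothesis of Proposition~\ref{OOB} holds. What remains is to show that \eqref{SRS-q} implies \eqref{OOB-q} for every $s$.

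Second, under SRS of size $n$ from $U$, every unit has the same inclusion probability $\pi_i = n/N$. Therefore $\tfrac{1}{\pi_i} - 1 = \tfrac{N-n}{n}$ is a constant that does not depend on $i$. It can be pulled out of both sums in \eqref{OOB-q}, which becomes
\[
\tfrac{N-n}{n}\sum_{i\in s} E_q\big(\mu(x_i,s_1)\mid i\in s_2\big) = \tfrac{N-n}{n}\sum_{i\in s} y_i .
\]
This is exactly \eqref{SRS-q} multiplied by the common constant. So \eqref{SRS-q} implies \eqref{OOB-q}, and Proposition~\ref{OOB} delivers \eqref{unbias-pq}.

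The only delicate point is the degenerate case $n=N$. There the constant is zero, but $R=\emptyset$, so \eqref{unbias-pq} holds trivially. There is also a small check that $E_q(\mu(x_i,s_1)\mid i\in s_2)$ is well defined for every $i\in s$. Under SRS of $s_1$ from $s$ with $n_1<n$, each $i\in s$ has $\Pr(i\in s_2\mid s) = (n-n_1)/n > 0$, so the conditioning event has positive probability. I expect no real obstacle beyond these checks.
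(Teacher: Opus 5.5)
Your proposal is correct and follows the route the paper intends. The paper gives this corollary no separate proof; it is the specialisation of Proposition~\ref{OOB}, with representative training supplied by Corollary~\ref{SRS-SRS} via Lemma~\ref{RT:lemma}, and with the common factor $\tfrac{1}{\pi_i}-1=\tfrac{N-n}{n}$ cancelling from both sides of \eqref{OOB-q}.

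Your extra checks are harmless. These are the case $n=N$, and $\Pr(i\in s_2\mid s)=(n-n_1)/n>0$ for $i\in s$. The fact that this probability is the same for all $s\ni i$ is also what makes the averaging step in the proof of Proposition~\ref{OOB} valid for this design.
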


The \emph{$q$-benchmarking} condition \eqref{OOB-q} is stated for the given sample $s$, without any unobserved $y$-values. It is likely that the equality will not hold exactly in practice, unless the predictor is rather simplistic, such as training-set mean given an SRS-SRS $pq$-design. Although this does not imply that the predictor $\mu(x, s_1)$ must be biased, since the condition is sufficient not necessary, a non-negligible discrepancy between the two sides of \eqref{OOB-q} may be a reason for tuning to achieve out-of-sample prediction unbiasedness. 

Meanwhile, let the \emph{subsampling Rao-Blackwell (SRB) predictor} of any out-of-sample unit with $x_i = x$ be given as
\begin{equation} \label{SRB}
\bar{\mu}(x, s) = E_q\big( \mu(x, s_1) \mid s \big) .
\end{equation} 
By definition, $\bar{\mu}(x, s)$ is more efficient than $\mu(x, s_1)$ because it removes the extra training variance $V_q\big( \mu(x, s_1)\mid s\big)$, and it has the same out-of-sample prediction bias as $\mu(x, s_1)$ which can be removed as follows.

\begin{proposition} \label{prop:OOB-tuned}
Given a representative training $pq$-design, with $\pi_i = \Pr(i\in s)$ and $q_{2i} \coloneq \Pr(i\in s_2\mid i\in s)$, where $n_R$ is the number of out-of-sample units, the predictor $\tilde{\mu}(x, s)$ is $pq$-unbiased for out-of-sample prediction, where
\[
\tilde{\mu}(x, s) = \bar{\mu}(x, s) - \tau_{\mu}(s) 
\]
and
\begin{equation} \label{OOB-tuned}
\tau_{\mu}(s) = \tfrac{1}{n_R} \Big\{ E_q\Big( \sum_{i\in s_2} \big( \tfrac{1}{\pi_i} -1\big) \tfrac{1}{q_{2i}} \mu(x_i, s_1)\Big) 
- \sum_{i\in s} \big( \tfrac{1}{\pi_i} -1\big) y_i \Big\} .
\end{equation}
\end{proposition}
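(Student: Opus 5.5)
The plan is to verify \eqref{unbias-pq} directly for $\tilde{\mu}$, i.e. to show $E_{pq}\big(\sum_{i\in R}\tilde{\mu}(x_i,s)\big) = E_p\big(\sum_{i\in R} y_i\big)$. The factor $\tfrac{1}{n_R}$ in \eqref{OOB-tuned} is there precisely so that it cancels when summing the constant shift over the $n_R$ out-of-sample units. For any given $s$ we therefore have
\[
\sum_{i\in R}\tilde{\mu}(x_i,s) = E_q\Big(\sum_{i\in R}\mu(x_i,s_1)\,\Big|\, s\Big) - E_q\Big(\sum_{i\in s_2}\big(\tfrac{1}{\pi_i}-1\big)\tfrac{1}{q_{2i}}\mu(x_i,s_1)\,\Big|\,s\Big) + \sum_{i\in s}\big(\tfrac{1}{\pi_i}-1\big)y_i ,
\]
using the definition \eqref{SRB} of $\bar{\mu}$. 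Since $n_R$ is only used for this cancellation, it does not matter that it varies with $s$. Taking $E_p$ of the last term gives the Horvitz--Thompson identity $E_p\big(\sum_{i\in s}(\tfrac{1}{\pi_i}-1)y_i\big) = \sum_{i\in U}(1-\pi_i)y_i = E_p(Y_R)$. It therefore remains to show that the first two terms have equal $pq$-expectation.

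Write both terms as sums over $U$ with indicators. For the first term,
\[
E_{pq}\Big(\sum_{i\in U}\mathbb{I}(i\notin s)\,\mu(x_i,s_1)\Big) = \sum_{i\in U}(1-\pi_i)\,E_{pq}\big(\mu(x_i,s_1)\mid i\notin s\big).
\]
For the second term, I would use $\Pr(i\in s_2) = \Pr(i\in s)\Pr(i\in s_2\mid i\in s) = \pi_i q_{2i}$. This gives
\[
\sum_{i\in U}\big(\tfrac{1}{\pi_i}-1\big)\tfrac{1}{q_{2i}}\,\pi_i q_{2i}\,E_{pq}\big(\mu(x_i,s_1)\mid i\in s_2\big) = \sum_{i\in U}(1-\pi_i)\,E_{pq}\big(\mu(x_i,s_1)\mid i\in s_2\big).
\]
Representative training \eqref{RT} equates the two conditional expectations unit by unit, so the two terms coincide and the result follows.

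The step needing care is the treatment of $q_{2i}$. If $q_{2i}$ is a unit-specific constant, as the notation $\Pr(i\in s_2\mid i\in s)$ suggests (for example under SRS-SRS, where $q_{2i}=n_2/n$), the computation above is immediate. If instead $q_{2i}$ could depend on $s$, I would replace it by $q_{2i}(s) = \Pr_q(i\in s_2\mid s)$. Conditioning on $s$ first then gives
\[
E_q\big(\mathbb{I}(i\in s_2)\mu(x_i,s_1)/q_{2i}(s)\mid s\big) = E_q\big(\mu(x_i,s_1)\mid s,\, i\in s_2\big).
\]
In that case one has to check that the subsequent $p$-average over $s\ni i$, weighted by $(1-\pi_i)/\pi_i$, reproduces $(1-\pi_i)E_{pq}(\mu\mid i\in s_2)$. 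This holds when $q_{2i}(s)$ does not vary with $s\ni i$, which is the case for the equal-probability $q$-designs considered in Corollaries \ref{SRS-SRS} and \ref{Pois-Ber}. I would state this as the operative reading of $q_{2i}$ in the proof.
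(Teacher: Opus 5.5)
Your proposal is correct and is essentially the paper's argument made explicit. The paper apportions, via the factor $1/n_R$, the expected discrepancy between the two sides of \eqref{OOB-q}; it computes the expectations of those two sides in the proof of Proposition~\ref{OOB} by the same indicator expansion, Horvitz--Thompson identity and appeal to \eqref{RT}. It then converts the $s_2$-sum using $E_q\big(\sum_{i\in s_2} q_{2i}^{-1}(\pi_i^{-1}-1)\mu(x_i,s_1)\big)=\sum_{i\in s}(\pi_i^{-1}-1)E_q\big(\mu(x_i,s_1)\mid i\in s_2\big)$, whereas you do both steps in one pass at the level of the joint $pq$-expectation.

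Your care over $q_{2i}$ is well placed. That identity requires $q_{2i}=\Pr_q(i\in s_2\mid s)$. The passage in the proof of Proposition~\ref{OOB} from the $p$-average over $s\ni i$ of $E_q\big(\mu(x_i,s_1)\mid i\in s_2\big)$ to $E_{pq}\big(\mu(x_i,s_1)\mid i\in s_2\big)$ then tacitly uses that this probability does not vary with $s\ni i$, which is exactly the condition you flag. Your reading $q_{2i}=\Pr(i\in s_2)/\pi_i$, by contrast, makes the argument go through under \eqref{RT} alone.
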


\begin{coro} Given the SRS-SRS $pq$-design, any given SRB-predictor \eqref{SRB} can be tuned by \eqref{OOB-tuned} to be unbiased for out-of-sample prediction \eqref{unbias-pq}, where $\tau_{\mu}(s)$ is the average in-sample OOB prediction error which is given as
\begin{equation} \label{OOB-SRS}
\tau_{\mu}(s) = E_q\Big( \tfrac{1}{n_2} \sum_{i\in s_2} \big( \mu(x_i, s_1) - y_i \big) \Big) 
= \tfrac{1}{n} \sum_{i\in s} E_q\big( \mu(x_i, s_1) \mid i\in s_2\big) - \tfrac{1}{n} \sum_{i\in s} y_i ~.
\end{equation}
\end{coro}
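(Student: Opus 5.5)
The corollary follows by specialising Proposition \ref{prop:OOB-tuned} to the SRS-SRS $pq$-design, so the plan is mostly computation. First, Corollary \ref{SRS-SRS} shows that the SRS-SRS design satisfies \eqref{RT:all}. By Lemma \ref{RT:lemma}, it therefore yields representative training of every $\mu(x,s_1)$, including any given one. The hypothesis of Proposition \ref{prop:OOB-tuned} thus holds, and $\tilde\mu(x,s) = \bar\mu(x,s) - \tau_\mu(s)$, with $\tau_\mu(s)$ as in \eqref{OOB-tuned}, is $pq$-unbiased for \eqref{unbias-pq}. It remains to show that \eqref{OOB-tuned} reduces to \eqref{OOB-SRS}.

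For the reduction, substitute the SRS constants $\pi_i = n/N$ and $q_{2i} = \Pr(i\in s_2\mid i\in s) = n_2/n$, where $n_2 = n - n_1$. We also have $n_R = N - n$. Then $\tfrac{1}{\pi_i} - 1 = (N-n)/n$ is constant in $i$, and
\[
\tfrac{1}{n_R}\big(\tfrac{1}{\pi_i}-1\big)\tfrac{1}{q_{2i}} = \tfrac{1}{N-n}\cdot\tfrac{N-n}{n}\cdot\tfrac{n}{n_2} = \tfrac{1}{n_2},
\qquad
\tfrac{1}{n_R}\big(\tfrac{1}{\pi_i}-1\big) = \tfrac{1}{n}.
\]
Hence $\tau_\mu(s) = E_q\big(\tfrac{1}{n_2}\sum_{i\in s_2}\mu(x_i,s_1)\big) - \tfrac{1}{n}\sum_{i\in s} y_i$. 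Because $n_2$ is fixed under SRS, we can write $\tfrac1n\sum_{i\in s}y_i = E_q\big(\tfrac{1}{n_2}\sum_{i\in s_2}y_i\big)$. This holds since each $i\in s$ lies in $s_2$ with probability $n_2/n$, and it gives the first expression in \eqref{OOB-SRS}, the average in-sample OOB prediction error.

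For the second expression in \eqref{OOB-SRS}, expand the $q$-expectation by indicators:
\[
E_q\Big(\sum_{i\in s_2}\mu(x_i,s_1)\Big) = \sum_{i\in s}\Pr(i\in s_2\mid s)\,E_q\big(\mu(x_i,s_1)\mid i\in s_2\big).
\]
Since $\Pr(i\in s_2\mid s) = n_2/n$, dividing by $n_2$ yields $\tfrac1n\sum_{i\in s}E_q(\mu(x_i,s_1)\mid i\in s_2)$.

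No step is a real obstacle. The only care needed is to read $q_{2i}$ correctly as the conditional probability given $s$, which is $n_2/n$ under SRS. One must also note that both $n_2$ and $n_R$ are nonrandom under SRS-SRS, which is what allows them to be moved inside and outside $E_q$.
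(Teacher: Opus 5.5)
Your proposal is correct and matches the paper's intended argument. The paper gives no separate proof of this corollary and treats it as a direct specialisation of Proposition \ref{prop:OOB-tuned}: representative training comes from Corollary \ref{SRS-SRS} via Lemma \ref{RT:lemma}, and substituting the SRS constants $\pi_i = n/N$, $q_{2i} = n_2/n$, $n_R = N-n$ reduces \eqref{OOB-tuned} to \eqref{OOB-SRS}, exactly as you compute.
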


\section{k-Nearest Neighbor (kNN) prediction} \label{sec:kNN}

Let us illustrate with kNN as a simple ML algorithm. Given feature vector $x$ and donor set $s$ of size $n$, let $\mathcal{N}_k(x;s)$ denote the set of $k$ nearest neighbours (or units in $s$) in terms of $x$, such that the kNN predictor can be given as
\[
\eta(x, s) = \frac{1}{k} \sum_{j\in \mathcal{N}_k(x;s)} y_j ~.
\]

According to Stone (1977, Theorem 2 and Corollary 3), the kNN predictor is IID-consistent for $E_{XY}(y \mid x)$ for any fixed $x$-value, as both the sample size $n$ and $k$ tend to $\infty$, where $E_{XY}$ denotes model expectation over the distribution of $(x,y)$. While the IID assumption may not be unreasonable in many applications, it seems unrealistic to require $k\rightarrow \infty$ in finite-sample settings where kNN is typically applied with a small fixed $k$. 

Let us apply design-based prediction theory under the SRS-SRS $pq$-design, where explicit results are available, which can as well be instructive for those interested in IID-model inference. Let the population mean and its generic prediction estimator be given as, respectively, 
\[
P = \frac{1}{N} \sum_{i\in U} y_i \qquad\text{and}\qquad \hat{P} = \frac{1}{N} \Big( \sum_{i\in s} y_i + \sum_{i\notin s} \hat{\mu}_i \Big) 
\quad\text{with}\quad \hat{\mu}_i = \mu(x_i, s)~.
\]

\subsection{Debiasing kNN}

Let us consider the following baseline, or kNN-based, predictors.
\begin{align*} 
\text{Sample-mean (baseline):}\quad & \hat{\mu}_i = \bar{y}_s = \tfrac{1}{n} \sum_{i\in s} y_i \\
\text{Sample-kNN:}\quad & \hat{\mu}_i =  \eta(x_i, s) \\
\text{SRB-kNN}: \quad & \hat{\mu}_i =  \bar{\eta}(x_i, s) = E_q\big( \eta(x_i, s_1) \mid s\big) \\
\text{SRB-kNN, residual-tuned:} \quad & \hat{\mu}_i =  \bar{\eta}(x_i, s) - \Big\{ \tfrac{1}{n} \sum_{k\in s} \bar{\eta}(x_k, s) - \bar{y}_s \Big\} \\
\text{SRB-kNN, OOB-tuned:} \quad & \hat{\mu}_i =  \bar{\eta}(x_i, s) - \Big\{ \tfrac{1}{n} \sum_{k\in s} \dot{\eta}(x_k, s) - \bar{y}_s \Big\}
\end{align*} 
where $\dot{\eta}(x_k, s_1)$ is the OOB SRB-predictor for any $k\in s$, i.e. 
\[
\dot{\eta}(x_k, s) = E_q\big( \eta(x_k, s_1) \mid k\in s_2 \big) ~.
\]

The sample-mean yields $pq$-unbiased prediction of $\bar{Y}_R = \frac{1}{N-n} \sum_{i\notin s} y_i$, but it does not use the $x$-feature information. Both sample-kNN and SRB-kNN are likely to be $pq$-biased: while we are not aware of any exactly unbiased estimator of the bias of sample-kNN, an unbiased bias estimator for SRB-kNN is given by the term in $\{ \cdot \}$ of the OOB-tuned SRB-kNN according to the theory above. The residual-tuned SRB-kNN is not unbiased, although it may have a smaller bias than SRB-kNN, whereas the OOB-tuned SRB-kNN is exactly $pq$-unbiased. Let us illustrate with a simple example, where exact calculation is possible.

\begin{example} \label{ex:toy}
In a population $U$ of size $N$, let $x_i = 1$ if $i=1,2$ and $x_i = 0$ if $i\neq1, 2$. Let $y_1 =1$ and $y=0$ if $i\neq 1$. Suppose an SRS-SRS $pq$-design with $k=1$ for the kNN predictor. By straightforward though tedious calculation (Appendix \ref{app:toy}), we can show that the OOB-tuned SRB-kNN is exactly unbiased, whereas the prediction bias of $\bar{Y}_R$ by the different kNN-predictors are given as
\begin{align*}
\text{Bias(sample-kNN)} & =  \frac{1}{N-1} \Big( \frac{n}{N-1} - 1 \Big) ~,\\
\text{Bias(SRB-kNN)} & =  \frac{1}{N-1} \Big( \frac{n_1}{N-1} - 1 \Big) ~,\\
\text{Bias(residual-tuned SRB-kNN)} & \overset{n_1=n_2}{=} \frac{1}{N-1} \Big( \frac{5n}{4(N-1)} - 1 \Big) ~.
\end{align*}
Notice that the bias of residual-tuned SRB-kNN depends on the choice of $(n_1, n_2)$, and the result given above refers to the case of $n_1 = n_2$.
\end{example}

\subsection{A simulation study} \label{sec:study}

We can use simulations to illustrate the potential bias of using kNN predictors and design-based debiasing. Let a population of values be generated as below: 
\begin{gather*}
y_i = 1 + 0.5 x_{1i} + 0.3 x_{2i} + 6 \max(0, x_{1i})^2 + \epsilon_i ~,\\
x_{1i}, x_{2i} \overset{\text{IID}}{\sim} N(0, 1)~, ~Cov(x_{1i}, x_{2i}) = 0.5 \quad\text{and}\quad 
\epsilon_i \overset{\text{IID}}{\sim} N(0, 0.2) ~.
\end{gather*}
Figure \ref{fig:popSim} depicts such a simulated population of size $N=250$. As can be seen, there is a dramatic `bend' of $E(y_i \mid x_{1i})$ somewhere between $x_{1i} =0$ and $x_{1i} =1$, which can potentially cause bias in kNN-prediction since the realised $y$-values are rather imbalanced on either side of $x_{1i}$ as $x_{1i}$ increases. 

\begin{figure}[ht]
\centering
\includegraphics[width=14cm,height=8cm]{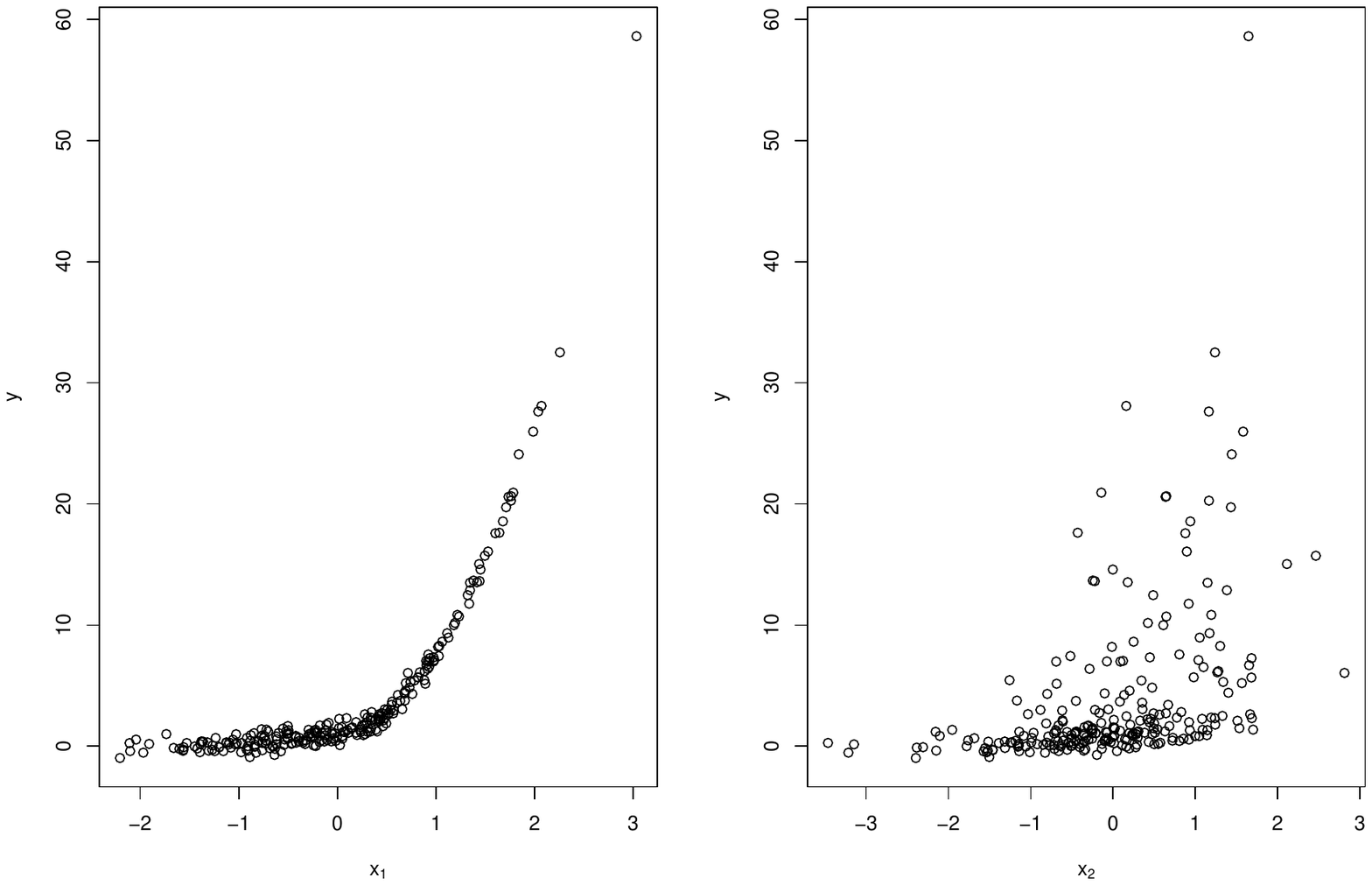}
\caption{Scatter plots of simulated population, $N=250$.}
\label{fig:popSim}
\end{figure}

We now conduct a Monte Carlo simulation with $B$ samples $s$ drawn by SRS from a simulated population, with population size $N=10^4$ and sample size $n=10^3$. Generically, let $\hat{\theta}^{(1)}, ..., \hat{\theta}^{(B)}$ be the realised values of a given population mean prediction estimator over the $B$ simulated samples. Given the population mean $P$, let the \emph{empirical} bias and MSE of $\hat{\theta}$ by simulation be
\[
\text{EBias}(\hat{\theta}) = \frac{1}{B} \sum_{b=1}^B \big( \hat{\theta}^{(b)} -P \big) \qquad\text{and}\qquad 
\text{EMSE}(\hat{\theta}) = \frac{1}{B} \sum_{b=1}^B \big( \hat{\theta}^{(b)} - P \big)^2 .
\]
In addition, let the \emph{empirical} standard error of $\hat{\theta}$ by simulation be
\[
\text{ESE}(\hat{\theta}) = \sqrt{\frac{1}{B-1} \sum_{b=1}^B \big( \hat{\theta}^{(b)} -  \frac{1}{B} \sum_{b=1}^B \hat{\theta}^{(b)} \big)^2} .
\]

In case $\hat{\theta}$ refers to an SRB-kNN prediction estimator, we can estimate its bias and MSE as described in Appendix \ref{app:inference} . Let $\overline{\text{bias}}(\hat{\theta})$ be the average of the $B$ bias estimates and $\overline{\text{mse}}(\hat{\theta})$ that of the MSE estimate. Next, $\overline{\text{mse}}(\hat{\theta})$ is available when $\hat{\theta}$ is the unbiased sample-mean $\bar{y}_s$. Finally, $\overline{\text{bias}}$ and $\overline{\text{mse}}$ of the sample-5NN are omitted in the results below, since we are unaware of any exactly unbiased estimator of the bias or MSE of sample-kNN.

\begin{table}[ht]
\centering
\caption{Results by SRS-SRS $pq$-design, $B=100$.}
\begin{tabular}{lrrrrrr} \toprule
& Sample & \multicolumn{3}{c}{SRB-5NN} & Sample \\ \cline{3-5}
& 5NN & Untuned & Resid-tuned & OOB-tuned & mean \\ \hline
EBias & -0.225 & -0.795 & -0.099 & -0.088 & -0.069 \\
ESE  & 0.028 & 0.300 & 0.118 & 0.122 &  0.211 \\
EMSE & 0.051 & 0.721 & 0.024 & 0.023 & 0.049 \\ 
$\overline{\text{bias}}$ & N/A & -0.696 & -0.011 &  0  & 0 \\
$\overline{\text{mse}}$ & N/A & 0.726 & 0.016 & 0.016 & 0.050 \\ \bottomrule
\end{tabular} \label{tab:sim}
\end{table}

Table \ref{tab:sim} shows the simulation results, based on $B=100$ samples $s$, where kNN-prediction uses $k=5$. For the SRB-kNN predictors, we used $T=100$ for the Monte Carlo SRB and $n_1 = 50$ as the training set size; see Appendix \ref{app:MC} for details. Notice that the subsample donor set size $n_1 = 50$ is rather small compared to the sample size $n=1000$, which is chosen here purely for illustration purposes, because the bias of kNN-prediction tends to increase in magnitude as the donor set size decreases, i.e. $n_1$ for subsample-kNN or $n$ for sample-kNN. 

We note the following about these results. Firstly, the population mean is $P =4.107$ in this case. The bias of the sample-5NN or SRB-5NN prediction estimator is not negligible: EBias is seen to dominate the corresponding ESE  for both, which is $-0.225$ versus $0.028$ for the sample-5NN and $-0.795$ versus $0.300$ for SRB-5NN. 

Secondly, the residual-tuned SRB-kNN is not unbiased, while the OOB-tuned SRB-kNN and the sample-mean are both unbiased. However, comparing the three EBiases by simulation, i.e. $-0.099$ versus $-0.088$ or $-0.069$ in Table \ref{tab:sim}, residual-tuning does seem to have removed most of the bias of kNN in this case. Nevertheless, debiasing by the OOB-tuned SRB-kNN is seen to be beneficial here, now that it achieves unbiasedness as well as the lowest EMSE (i.e. $0.023$) among all the alternatives.

Thirdly, when it comes to bias and MSE estimation, the result is acceptable for the sample-mean $\bar{y}_s$, where $\overline{\text{mse}} =0.050$  compared to $\text{EMSE} = 0.049$. For the SRB-5NN, we have $\overline{\text{bias}} = -0.696$ compared to $\text{EBias} = -0.795$ and $\overline{\text{mse}} =0.726$ to $\text{EMSE} = 0.721$. For the OOB-tuned SRB-5NN that is unbiased, $\overline{\text{mse}} =0.016$ is reasonable compared to its $\text{ESE}^2 = 0.015$. For the residual-tuned SRB-5NN, we have $\overline{\text{mse}} =0.016$, which is clearly lower than its $\text{EMSE} = 0.024$ but close to $\overline{\text{bias}}^2 + \text{ESE}^2 = (-0.011)^2 + 0.118^2 = 0.014$. This again suggests that its EBias here likely overstates its bias due to the Monte Carlo error. 

In summary, the simulation study here has demonstrated the potential bias caused by direct kNN-prediction, such that debiasing by our theory may be beneficial. Moreover, the associated inference of SRB-prediction enables one to choose among the  alternative predictors in practice. Although residual-tuning may be able to remove much of the bias, unbiased OOB-tuning is preferable when its MSE is not larger than that of residual-tuning.

\section{Design-based classification} \label{sec:classification}

We have so far considered design-unbiased prediction. Unit-level classification of categorical outcomes may be of practical interest as well. Below we consider specifically classification of binary outcomes.  

\subsection{Out-of-sample classification accuracy}

Let $\mu(x, s_1)$ be the probability of $y =1$ according to $\mu(\cdot)$ trained on $s_1$, such as the kNN-mean given $x$ and donor set $s_1$. Let $\hat{y}_i(s_1)$ be a binary classifier based on $\mu(x_i, s_1)$ for any $i\in U$. For instance, $\hat{y}_i(s_1) = \mathbb{I}\big( \mu(x_i, s_1) > 0.5\big)$ deterministically, or $\hat{y}_i(s_1) = \mathbb{I}\big(u_i \leq \mu(x_i, s_1)\big)$ randomly given $u_i \overset{\text{IID}}{\sim} \text{Unif}(0,1)$ where $u_i$ is independently generated across the units for the purpose of classification. 

Let the \emph{out-of-sample classification accuracy} of $\hat{y}_i(s_1)$ refer to its performance averaged over all the out-of-sample units $R =U\setminus s$, which is defined as
\begin{equation} \label{cls1}
\phi(R\mid s_1) = \frac{1}{n_R} \Phi(R\mid s_1)\quad\text{and}\quad \Phi(R\mid s_1) = \sum_{i\in R} \mathbb{I}\big(y_i = \hat{y}_i(s_1) \big)
\end{equation}
As with unbiased prediction \eqref{unbias-pq}, we leave it to future study to determine whether design-unbiased classification or  inference of unit-specific $y_i$ can be achieved in a sensible manner. 

It follows that deterministic classification causes biased results generally. For instance, suppose the population proportion $P_x = 0.2$ given $x$ is known, then $\mathbb{I}(P_x \geq \psi)$ is either 0 or 1 for \emph{all} the units with $x_i =x$, given any threshold value $\psi \in (0,1)$, which is clearly biased. Meanwhile, let $N_x$ be the number of units with $x_i = x$ in the population, then the randomised classifier $\mathbb{I}(u_i \leq P_x)$ achieves unbiased classification, in the sense that
\[
E_u\Big( \tfrac{1}{N_x} \sum_{i\in U: x_i = x} \mathbb{I}\big(y_i = \mathbb{I}(u_i \leq P_x) \big) \Big) 
= P_x =  \tfrac{1}{N_x} \sum_{i\in U: x_i = x} y_i
\]
where the expectation is with respect to $u_i \overset{\text{IID}}{\sim} \text{Unif}(0,1)$.

Given $\mu(x, s_1)$ instead of $P_x$ in practice, one can estimate $\phi(R\mid s_1)$ conditional on $s_1$ based on the OOB classification errors $\{ \mathbb{I}\big( y_i = \hat{y}_i(s_1) \big) : i\in s_2\}$,  just like estimating its prediction error based on $\{ \mu(x_i, s_1) - y_i : i\in s_2\}$. However, what is of interest is the classification accuracy associated with the corresponding SRB-predictor $\bar{\mu}(x_i, s)$, which is more efficient than $\mu(x_i, s_1)$.

Let us consider directly the Monte Carlo implementation of SRB based on $T$ random splits, $(s_1^{(t)}, s_2^{(t)})$ for $t=1, ..., T$. This includes the exact SRB-probability $\bar{\mu}(x, s)$ as the special case of $T = \mathcal{C}(n, n_1)$ distinct splits. For any $i\in U$, let
\[
T_i = \sum_{t=1}^T \mathbb{I}(i\notin s_1^{(t)})
\]
be the number of splits where the unit is out of $s_1$, where $T_i = T$ if $i\notin s$. Let 
\begin{equation} \label{OOB-SRB}
\mathring{\mu}(x_i, s) = \frac{1}{T_i} \sum_{t=1}^T \mu(x_i, s_1^{(t)}) \mathbb{I}(i\notin s_1^{(t)})
\end{equation}
for any $i\in U$, which has the same expectation as the exact SRB-probability. It follows that, given a representative training $pq$-design \eqref{RT}, we have  
\begin{equation} \label{unbiasMC}
E_{pq}\big( \mathring{\mu}(x_i, s) \mid i\in s_2\big) = E_p\big( \bar{\mu}(x_i, s) \mid i\notin s_1 \big) 
= E_{pq}\big( \mathring{\mu}(x_i, s) \mid i\notin s\big)
\end{equation}
for any $i\in U$. Let $\mathring{y}_i(s)$ be the classifier using the probability $\mathring{\mu}(x_i, s)$ by \eqref{OOB-SRB}. 
Let its \emph{out-of-sample classification accuracy} be given similarly as \eqref{cls1}, 
\begin{equation} \label{cls}
\mathring{\phi}(R) = \frac{1}{n_R} \mathring{\Phi}(R) \quad\text{and}\quad \mathring{\Phi}(R) = \sum_{i\in R} \mathbb{I}\big(y_i = \mathring{y}_i(s) \big)
\end{equation}

\subsection{OOB estimation of classification accuracy}

Let the unit-specific classification accuracy using the probability $\mathring{\mu}(x_i, s)$ be
\[
\mathring{\alpha}_i = E_{pq}\Big( \mathbb{I}\big( y_i = \mathring{y}_i(s)\big) \mid i\notin s_1 \Big) 
\]
for any $i\in U$.
In other words, for given unit $i$ with fixed $y_i$, $\mathring{\alpha}_i$ measures the performance of $\mathring{y}_i(s)$ over repeated sampling conditional on $i\notin s_1$. Next, let us introduce a regularity condition on $\mathring{\alpha}_i$ as below.
\begin{itemize}[leftmargin=2mm] \itshape
\item[] (M) Given representative training $pq$-design, $\mathring{\alpha}_i$ is a function given as
\[
\mathring{\alpha}_i = m_i\Big( y_i, x_i, E_{pq}\big( \mathring{\mu}(x_i, s) \mid i \notin s_1 \big) \Big)
\]
\end{itemize}
Note that the function $m_i(\cdot)$ is allowed to vary across the units. The condition (M) effectively requires $\mathring{\alpha}_i$ to be a constant given the arguments of $m_i(\cdot)$. 

By \eqref{unbiasMC}, we can exchange $E_{pq}\big( \mathring{\mu}(x_i, s) \mid i \notin s \big)$ and $E_{pq}\big( \mathring{\mu}(x_i, s) \mid i \in s_2 \big)$, such that the mapping $m_i(\cdot)$ of the condition (M) yields 
\begin{equation} \label{M-pq}
\mathring{\alpha}_i = E_{pq}\Big( \mathbb{I}\big( y_i = \mathring{y}_i(s)\big) \mid i\in s_2 \Big) 
= E_{pq}\Big( \mathbb{I}\big( y_i = \mathring{y}_i(s)\big) \mid i\notin s \Big) .
\end{equation}

Notice that the condition (M) is not trivial since, as the Lemma below shows: although each $\mathbb{I}\big( y_i = \mathring{y}_i(s)\big)$ is a function of $\mathring{\mu}(x_i, s)$, apart from $(y_i, x_i)$ obviously, it is not necessary that $\mathring{\alpha}_i$ then only depends on $E_{pq}\big( \mathring{\mu}(x_i, s) \mid i\notin s_1 \big)$.

\begin{lemma} \label{lem:2}
The random classifier $\mathring{y}_i(s) = \mathbb{I}\big(u_i \leq \mathring{\mu}(x_i, s)\big)$, where $u_i \overset{\text{IID}}{\sim} \text{Unif}(0,1)$, satisfies the condition (M). The deterministic classifier $\mathring{y}_i(s) = \mathbb{I}\big( \mathring{\mu}(x_i, s) \geq \psi \big)$ does not satisfy the condition (M) given any threshold value $\psi \in (0,1)$.
\end{lemma}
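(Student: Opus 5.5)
For the random classifier, I will condition on $s$ (and on the $q$-splits) and integrate out $u_i$ first, using its independence from the sampling and the splits. Given $s$, we have $\Pr_u\big(\mathring{y}_i(s)=1 \mid s\big) = \mathring{\mu}(x_i,s)$. Hence
\[
E_u\Big(\mathbb{I}\big(y_i=\mathring{y}_i(s)\big)\mid s\Big) = y_i\,\mathring{\mu}(x_i,s) + (1-y_i)\big(1-\mathring{\mu}(x_i,s)\big),
\]
which is affine in $\mathring{\mu}(x_i,s)$ with coefficients depending only on the fixed $y_i$. Taking $E_{pq}(\cdot\mid i\notin s_1)$ and using linearity of expectation gives
\[
\mathring{\alpha}_i = y_i\,\bar m + (1-y_i)(1-\bar m), \qquad \bar m = E_{pq}\big(\mathring{\mu}(x_i,s)\mid i\notin s_1\big).
\]
This is of the form required by (M), with $m_i(y,x,\bar m) = y\bar m+(1-y)(1-\bar m)$; it does not even depend on $i$ or $x_i$. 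One point needs care: $\mathring{\mu}$ is built from the $T$ Monte Carlo splits. I will treat the $u_i$ as generated independently of $(s, s_1^{(t)})$, so the tower property applies and the expectation defining $\mathring{\alpha}_i$ is the joint one over $p$, $q$ and $u$.

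For the deterministic classifier, (M) asserts that $\mathring{\alpha}_i$ is determined by $(y_i,x_i,\bar m)$ alone. To refute it, I will exhibit two situations that share the same $(y_i,x_i)$ and the same conditional mean $\bar m$ but give different values of $\mathring{\alpha}_i$. Here $\mathbb{I}(\mathring{\mu}\ge\psi)$ is a nonlinear step function, so
\[
\mathring{\alpha}_i = \Pr_{pq}\big(\mathring{\mu}(x_i,s)\ge\psi \mid i\notin s_1\big) \quad \text{when } y_i=1,
\]
and this probability depends on the whole conditional distribution of $\mathring{\mu}$, not just its mean. Concretely, I will take $y_i=1$ and fix $\bar m=\psi$. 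In the first configuration $\mathring{\mu}\equiv\psi$ almost surely, which gives $\mathring{\alpha}_i=1$. In the second, $\mathring{\mu}$ takes the values $\psi\pm\delta$ with probability $1/2$ each, which gives $\mathring{\alpha}_i=1/2$. Since $\psi\in(0,1)$, a small $\delta>0$ keeps these values inside $[0,1]$, so both configurations are legitimate probabilities.

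The main obstacle is to make the counterexample concrete within the framework. The two distributions must arise from actual populations, or from actual algorithms $\mu$ on the same $(y_i,x_i)$, under a representative-training $pq$-design such as SRS-SRS (Corollary \ref{SRS-SRS}). This is needed because $m_i$ is a single function per unit, so one must argue that varying the other units' values (or the algorithm) changes $\mathring{\alpha}_i$ while leaving $(y_i,x_i,\bar m)$ fixed. I would first try a tiny population in the spirit of Example \ref{ex:toy}, using a kNN-type $\mu$ with $k=1$ under SRS-SRS. In that setting $\mathring{\mu}(x_i,s)$ takes few values, and I would adjust the other units' $y$-values so as to keep the conditional mean fixed while changing how its mass falls relative to $\psi$. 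Failing that, the abstract two-point argument suffices to show that no single function of the mean can represent $\mathring{\alpha}_i$ across admissible setups.
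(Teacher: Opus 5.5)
Your proposal matches the paper's proof: integrating out $u_i$ makes the conditional accuracy affine in $\mathring{\mu}(x_i,s)$, giving $\mathring{\alpha}_i=\bar m$ or $1-\bar m$ according to $y_i$; for the threshold rule, $\mathring{\alpha}_i=\Pr_{pq}\big(\mathring{\mu}(x_i,s)\geq\psi \mid i\notin s_1\big)$ (or its complement), which the paper only says depends on the variance of $\mathring{\mu}(x_i,s)$ as well as its mean, and which your degenerate-versus-two-point example with common mean $\psi$ makes explicit. On realizability, which goes beyond what the paper attempts: the comparison that matters for how (M) is used in \eqref{M-pq} already arises within one setting, because the conditional laws of $\mathring{\mu}(x_i,s)$ given $i\in s_2$ and given $i\notin s$ share a mean by \eqref{unbiasMC} but typically differ in spread (training sets come from $s\setminus\{i\}$ rather than $s$, and there are $T_i<T$ rather than $T$ splits), so their threshold probabilities generally differ.
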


\begin{proposition} \label{OOB-cls}
Given a representative training $pq$-design and a classifier satisfying the condition (M), a $pq$-unbiased predictor of the out-of-sample classification accuracy $\mathring{\Phi}(R)$ in \eqref{cls} is given as
\begin{equation} \label{cls-est}
\mathring{\Phi}_w(s) = \sum_{i \in s} \big( \tfrac{1}{\pi_i} -1 \big) \mathbb{I}\big( y_i = \mathring{y}_i(s) \big)~.  
\end{equation}
\end{proposition}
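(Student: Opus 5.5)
The plan is to show $E_{pq}\big(\mathring{\Phi}_w(s)\big) = E_{pq}\big(\mathring{\Phi}(R)\big)$ by writing both sides as sums over units $i\in U$ with indicator weights. Each sum is then reduced to unit-level conditional expectations, and the two are matched using \eqref{M-pq}. Throughout, the expectation is over $s\sim p(s)$, the Monte Carlo splits by the $q$-design, and, for the random classifier, the $u_i$. Let $Z_i = \mathbb{I}\big(y_i = \mathring{y}_i(s)\big)$.

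The out-of-sample side is handled first. Write $\mathring{\Phi}(R) = \sum_{i\in U} \mathbb{I}(i\notin s) Z_i$, so that
\[
E_{pq}\big(\mathring{\Phi}(R)\big) = \sum_{i\in U} (1-\pi_i)\, E_{pq}\big( Z_i \mid i\notin s\big) = \sum_{i\in U} (1-\pi_i)\, \mathring{\alpha}_i .
\]
The last equality is the second identity in \eqref{M-pq}, which holds given representative training and condition (M). For the in-sample side, write $\mathring{\Phi}_w(s) = \sum_{i\in U} \mathbb{I}(i\in s)(\pi_i^{-1}-1) Z_i$, so that
\[
E_{pq}\big(\mathring{\Phi}_w(s)\big) = \sum_{i\in U} (1-\pi_i)\, E_{pq}\big( Z_i \mid i\in s\big).
\]
It therefore suffices to show $E_{pq}(Z_i \mid i\in s) = \mathring{\alpha}_i$ for each $i\in U$.

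This last step is the main obstacle. For a unit in $s$, $\mathring{\mu}(x_i,s)$ in \eqref{OOB-SRB} averages only over the splits in which $i\notin s_1^{(t)}$, that is, in which $i\in s_2^{(t)}$. So $Z_i$ for $i\in s$ is computed from OOB information only, and its distribution given $i\in s$ coincides with its distribution given that $i$ is OOB. I would argue this by conditioning on $s$ containing $i$. The quantity $\mathring{\mu}(x_i,s)$ is then the average of $\mu(x_i,s_1)$ over draws of $s_1$ from the $q$-design restricted to $i\notin s_1$, so its law given $i\in s$ equals its law given $i\in s_2$ under the $pq$-design. Hence $E_{pq}(Z_i\mid i\in s) = E_{pq}(Z_i\mid i\in s_2)$. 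By the first identity in \eqref{M-pq}, this equals $\mathring{\alpha}_i$.

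In the remaining case one must also ensure $T_i>0$, so that $\mathring{\mu}(x_i,s)$ is defined for every $i\in s$. I would adopt the convention, implicit in \eqref{OOB-SRB}, that the splits are such that each sample unit is OOB at least once, or else interpret the conditioning as on $T_i\geq 1$. With that, summing the unit-level identities gives $E_{pq}\big(\mathring{\Phi}_w(s)\big) = \sum_{i\in U}(1-\pi_i)\mathring{\alpha}_i = E_{pq}\big(\mathring{\Phi}(R)\big)$. This is $pq$-unbiasedness for the random target $\mathring{\Phi}(R)$.
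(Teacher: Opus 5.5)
Your proposal is correct and follows the paper's proof essentially line for line. Both $E_{pq}\big(\mathring{\Phi}(R)\big)$ and $E_{pq}\big(\mathring{\Phi}_w(s)\big)$ are reduced to $\sum_{i\in U}(1-\pi_i)\mathring{\alpha}_i$ through the two identities in \eqref{M-pq}, and the in-sample indicator is identified with its OOB version exactly as the paper does when it passes from $E_{pq}(\cdot \mid i\in s_2, i\in s)$ to $E_{pq}(\cdot\mid i\in s_2)$. Your caveat about $T_i\geq 1$ is a sensible detail the paper leaves implicit; also note that equating conditioning on $i\in s$ with conditioning on $i\in s_2$ tacitly uses that $\Pr(i\in s_2\mid s)$ does not vary over samples $s\ni i$ (true under SRS-SRS or Poisson-Bernoulli), an assumption the paper likewise makes only implicitly.
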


Notice that we do not estimate each $\mathring{\alpha}_i$ specifically, but rather we work with their aggregates. The result above includes the exact SRB-probability $\bar{\mu}(x_i, s)$ as a special case when it is feasible instead of $\mathring{\mu}(x_i, s)$.

Notice also that the use of in-sample out-of-bag $\mathbb{I}\big( y_i = \mathring{y}_i(s) \big)$ in \eqref{cls-est} means that debiasing $\mu(x_i, s_1)$ by tuning needs to be restricted to $s_1$, such as kNN based on donor set $s_{11}$ and tuned by $s_1\setminus s_{11}$, where $s_{11} \subset s_1$. This is because tuning must not invalidate the `out-of-bag-ness' of $\mathring{y}_i$, i.e. tuning must be based on the units outside each $s_2$ --- hence, inside each $s_1$.

\section{Illustrative application} \label{sec:application}

We apply prediction tuning and classification accuracy estimation to a real dataset to demonstrate the theory and methods developed in this paper. 

\subsection{Data and setup} \label{sec:setup}

Let $N=10^4$ satellite images be randomly selected from the CropHarvest (HDF5) dataset, each assigned $y=1$ in case of a coffee field or $y=0$ otherwise, associated with an $x$-vector of 216 features. We treat these images as the population $U$ with associated constants $\{ (y_i, x_i) : i=1, ..., N\}$ over repeated sampling. 

Let kNN with $k=5$ be the ML algorithm, based on any of the three feature sets below: 
\begin{itemize}[leftmargin=4mm,itemsep=0pt] \setstretch{-1.0} 
\item[] I. (x\_126, x\_144, x\_180, x\_190, x\_198) by LASSO logistic regression;
\item[] II. (x\_008, x\_025, x\_048, x\_129, x\_183) by random selection;
\item[] III. (x\_043, x\_162, x\_136, x\_063, x\_024) by forward selection for 5NN.
\end{itemize}
Consider the following 5NN-based predictors for out-of-sample $y$-classification. 
\begin{itemize}[leftmargin=6mm, itemsep=0pt] 
\item Sample-5NN $\eta(x,s)$, given sample $s$ of size $n$ selected by SRS.
\item SRB-5NN $E_q\big( \eta(x,s_1)\mid s\big)$, with training set $s_1$ of size $n_1$ by SRS from $s$.
\item OOB-tuned SRB-5NN $E_q\big( \eta_{\tau}(x, s_1) \mid s\big)$, where $\eta_{\tau}(x, s_1)$ is the $s_{11}$-5NN that is tuned by out-of-$s_{11}$ errors in $s_1\setminus s_{11}$, given $s_{11}$ of size $n_{11}$ by SRS from $s_1$, i.e.
\[
\eta_{\tau}(x, s_1) = \eta(x, s_{11}) - \tfrac{1}{n_1 - n_{11}} \sum_{i\in s_1\setminus s_{11}} \big( \eta(x_i, s_{11}) - y_i \big) 
\]
\end{itemize}

We use simulations to evaluate the performance of design-based debiasing and classification accuracy prediction. Let $n=200$ be the size of sample $s$ by SRS from $U$, let $n_1 = 120$ be the size of $s_1$ for SRB given by SRS from $s$, and let $n_{11} = 80$ be the size of $s_{11}$ for the OOB-tuned SRB given by SRS from $s_1$. Let $T=100$ for Monte Carlo SRB, such that, for any $k\in U$, the SRB-5NN predictor is given by \eqref{OOB-SRB} generically, and the OOB-tuned SRB-5NN predictor is given by 
\[
\mathring{\eta}_{\tau}(x_k, s) = \frac{1}{T_k} \sum_{t=1}^T \mathbb{I}(k\notin s_1^{(t)}) 
\Big\{ \eta(x_k, s_{11}^{(t)}) - \tfrac{1}{n_1 - n_{11}} \sum_{i\in s_1^{(t)}\setminus s_{11}^{(t)}} \big( \eta(x_i, s_{11}^{(t)}) - y_i \big)  \Big\} .
\]
Finally, for the classification accuracy \eqref{cls} by randomised classifier associated with a given OOB SRB-5NN probability, denoted by $\mathring{\eta}(x_i, s)$, we can replace the random variable $\mathbb{I}\big( y_i = \mathring{y}_i(s) \big)$ in \eqref{cls-est} by 
\[
p_i = \mathbb{I}(y_i =1) \mathring{\eta}(x_i, s) + \mathbb{I}(y_i =0) \big( 1- \mathring{\eta}(x_i, s) \big) ~.
\]
This is more efficient than using the random $\mathring{y}_i(s) = \mathbb{I}\big( u_i \leq \mathring{\eta}(x_i, s) \big)$, which is subject to an additional variance due to $u_i \overset{\text{IID}}{\sim} \text{Unif}(0,1)$.

\subsection{Results}

Denote by $B$ the number of simulations, each corresponding to a different sample $s$ from $U$. Let $\hat{P}^{(b)}$ be a realised prediction estimate of the population mean $P$, as defined in Section \ref{sec:kNN}, where $b=1, ..., B$. Let
\begin{gather*}
\text{EBias} = \frac{1}{B} \sum_{b=1}^B (\hat{P}^{(b)} - P) \quad\text{with}\quad
\text{V(EBias)} = \frac{1}{B(B-1)} \sum_{b=1}^B (\hat{P}^{(b)} - P)^2 \\
\quad\text{and}\quad \text{EVar} = \frac{1}{B-1} \sum_{b=1}^B \{ \hat{P}^{(b)} - \tfrac{1}{B} \sum_{b=1}^B \hat{P}^{(b)} \}^2~.
\end{gather*}

\begin{table}[ht]
\centering
\caption{Debiasing results, feature set (I)--(III), $B=100$ simulations} 
\begin{tabular}{lrrr} \toprule
Predictor, feature set (I) & EBias & $\sqrt{\text{V(EBias)}}$ & EVar \\ \midrule
Sample-5NN & -0.01712	& 0.00192 & 0.000072 \\
SRB-5NN & -0.01776 & 0.00200 & 0.000081 \\
OOB-tuned SRB-5NN & -0.00115 & 0.00130 & 0.000166 \\ \bottomrule
Predictor, feature set (II) & EBias & $\sqrt{\text{V(EBias)}}$ & EVar \\ \midrule
Sample-5NN & 0.00063 & 0.00079 & 0.000061 \\
SRB-5NN & -0.00236 & 0.00091 & 0.000076 \\
OOB-tuned SRB-5NN & 0.00008 & 0.00104 & 0.000107 \\ \bottomrule
Predictor, feature set (III) & EBias & $\sqrt{\text{V(EBias)}}$ & EVar \\ \midrule
Sample-5NN & 0.00234 & 0.00043 & 0.000013 \\
SRB-5NN & 0.00422 & 0.00061 & 0.000019 \\
OOB-tuned SRB-5NN & -0.00035 & 0.00042 & 0.000017 \\ \bottomrule
\end{tabular} \label{tab:bias}
\end{table}

Table \ref{tab:bias} shows the results for debiasing, based on $B=100$ simulations. Firstly, given any feature set, the OOB-tuned SRB-5NN is essentially unbiased for the population mean: its EBias is clearly closer to 0 than the untuned SRB-5NN and not significantly different from 0 in light of V(EBias). This confirms again the theory of design-based debiasing developed in Section \ref{sec:prediction}. 

Secondly, the bias of kNN prediction depends on the selected features and their sample distribution. In this case, the randomly selected feature set (II) causes very little bias to the standard sample-5NN, which is possible if it leads to ‘random’ donor imputation under SRS. However, such `accidental' unbiasedness can come at a price of increased variance, which is evident if one compares the EVar of sample-5NN given either feature set (II) or (III). We notice that the results of feature set (II) are included here only for instructive purposes, since one is unlikely to use randomly selected features for kNN in practice. 

Thirdly, feature selection is an integral part of algorithmic ML. One can see this most clearly in the poor results given feature set (I). Although it may be a reasonable choice for logistic regression, using these features for kNN can increase both the prediction bias and variance. In this case, the EBias of sample-5NN (or SRB-5NN) is larger than $\sqrt{\text{EVar}}$ and the EVar is even larger than that given the randomly selected feature set (II).

Finally, given feature set (III) for kNN, the EBias of directly applying 5NN is small but still appreciable compared to $\sqrt{\text{EVar}}$. Since the empirical MSE of the OOB-tuned SRB-5NN is smaller than either the untuned 5NN predictors, debiasing is worthwhile for out-of-sample prediction here.

\begin{table}[ht]
\centering
\caption{Mean over simulations ($B=100$) of classification accuracy  $\mathring{\phi}(R)$, its predictor $\hat{\phi}(R)$, and associated squared error $\{ \hat{\phi}(R) - \mathring{\phi}(R) \}^2$} 
\begin{tabular}{lrrr} \toprule
& \multicolumn{3}{c}{Average over $B$ simulations} \\ \cline{2-4}
Predictor, feature set (I) & $\mathring{\phi}(R)$ & $\hat{\phi}(R)$ & $\{ \hat{\phi}(R) - \mathring{\phi}(R) \}^2$\\ \midrule
SRB-5NN & 0.944 & 0.945 & 0.000143 \\
OOB-tuned SRB-5NN & 0.922 & 0.923 & 0.000141 \\ \bottomrule
Predictor, feature set (II) & $\mathring{\phi}(R)$ & $\hat{\phi}(R)$ & $\{ \hat{\phi}(R) - \mathring{\phi}(R) \}^2$\\ \midrule
SRB-5NN & 0.962 & 0.963 & 0.000102 \\
OOB-tuned SRB-5NN & 0.939 & 0.941 & 0.000106 \\ \bottomrule
Predictor, feature set (III) & $\mathring{\phi}(R)$ & $\hat{\phi}(R)$ & $\{ \hat{\phi}(R) - \mathring{\phi}(R) \}^2$\\ \midrule
SRB-5NN & 0.989 & 0.989 & 0.000018 \\
OOB-tuned SRB-5NN & 0.987 & 0.987 & 0.000020 \\ \bottomrule
\end{tabular} \label{tab:accuracy}
\end{table}

When it comes to the classification accuracy $\mathring{\phi}(R)$ associated with a given SRB-5NN predictor, as defined by \eqref{cls}, let $\hat{\phi}(R) = n_R^{-1} \mathring{\Phi}_w(s)$ according to \eqref{cls-est}. Table \ref{tab:accuracy} gives the results based on $B=100$ simulations, where the EBias of classification accuracy prediction is the difference between columns $\mathring{\phi}(R)$ and $\hat{\phi}(R)$, and the last column shows its empirical MSE over the simulations. It is evident that Proposition \ref{OOB-cls} achieves unbiased prediction of the out-of-sample classification accuracy in every situation.

Notice that, given either feature set (I) or (II), the classification accuracy of the unbiased OOB-tuned SRB-5NN is clearly reduced compare to the biased SRB-5NN. The reason is that tuning can cause the predicted probabilities to be out of the bounds $[0,1]$, which then leads to truncation of probabilities and loss of classification accuracy, as illustrated by the example below. 

\begin{example} Suppose two out-of-sample units with $(y_1, y_2) = (1,0)$. Suppose a given predictor yields $(\mu_1, \mu_2) = (0.95, 0.07)$, such that the probability of correct randomised classification is $\tfrac{1}{2} (0.95 + 0.93) = 0.94$. Given $\mu_1 + \mu_2 = 1.02 \neq y_1 + y_2$, tuning is possible, which yields, say, $(\mathring{\mu}_1, \mathring{\mu}_2) = (\mu_1, \mu_2) + \epsilon$. As long as $(\mathring{\mu}_1, \mathring{\mu}_2)$ are within the bounds $[0,1]$, the probability of correct randomised classification is unaffected, since $\tfrac{1}{2} (\mu_1+\epsilon + 1- \mu_2 -\epsilon) = \tfrac{1}{2} (\mu_1+ 1- \mu_2)$. However, it is possible to obtain out-of-bound probabilities, say, $(\mathring{\mu}_1, \mathring{\mu}_2) = (1.04, 0.16)$, in which case the probability of correct randomised classification would be reduced to $\tfrac{1}{2} (1 + 0.84) = 0.92$. 
\end{example}

Meanwhile, given the feature set (III) selected for kNN, the drop of classification accuracy by debiasing is much less pronounced, i.e. 0.987 compared to 0.989, and the classification accuracy is higher compared to using the feature set (I) or (II). In other words, not only does sensible feature selection improve the classification accuracy, it can also reduce the risk of out-of-bounds tuning for the purpose of unbiased population prediction estimation.

\section{Summary of findings and future research} \label{sec:final}

ML must use a training dataset to form the prediction model or algorithm. It would be difficult to relate its in-sample out-of-bag prediction error of a given unit to its out-of-sample prediction error of the same unit, if the \emph{expected} prediction would differ depending on whether the unit is in or out of the sample. The concept of representative training articulates this intuition in terms of how the training set can be selected from the relevant finite population, whether it consists of persons, businesses, physical or spatial objects. 

Notice that the same intuition has always existed in IID-model inference, such as cross-validation, or out-of-bag accuracy for random forests. Since one can obtain IID samples by with-replacement sampling from a given population, the concept of representative training can be viewed as a generalisation of these IID-model inference ideas to finite population inference, where the units may be selected with unequal probabilities and without replacement. 

Moreover, representative training provides generally a means for tuning a trained algorithm, based on its in-sample out-of-bag prediction errors, so as to achieve design-unbiased out-of-sample prediction, i.e. over repeated sampling-and-training under the $pq$-design. A topic for future study may be to investigate representative training for interval estimation so as to achieve exact design-based coverage as it was defined by Neyman (1934). 

In addition, we have shown that out-of-sample unit-level classification of categorical outcomes can yield unbiased prediction estimation of population totals, and the associated classification accuracy can be assessed unbiasedly given representative training by the $pq$-design.
 
However, one should keep in mind the distinctive needs of population-level estimation and unit-level classification. For instance, in many applications of ML, such as official statistics, population estimation is always required, but unit classification is only sometimes. If unbiasedness of population estimation is necessary, then one may have to accept a reduced classification accuracy. This seems therefore a natural topic for future research, i.e. how to implement unit-level classification subject to unbiased population prediction estimation, such that the classification accuracy can be maximised.  

Finally, looking back on the groundbreaking idea of Neyman (1934) to finite population sampling, which consists of a ``representative'' method of sampling and an associated ``consistent'' method of estimation, one may view representative training $pq$-designs as an extension of the concept of ``representative'' sampling, providing an intuitive basis for broadening the scope of associated ``consistent'' ML estimation by predicting or classifying the out-of-sample units. We hope this general outlook on design-unbiased finite population inference may be useful to many practitioners of ML.

\appendix 
\section{Proofs} \label{app:proof}

Lemma \ref{RT:lemma}.
\begin{proof}
For the proof here, write $\pi_{2i}(s_1)$ for $\pi_{2i}$ to emphasise it as a function of $s_1$.
Given a well-defined $pq$-design, we have $E_{pq}(\cdot) = E_{s_1s_2}(\cdot)$, such that
\begin{gather}
\eqref{RT} \quad\Leftrightarrow\quad 
\sum_{s_1\not\ni i} \mu(x_i, s_1) \frac{f(s_1) \Pr(i\in s_2 \mid s_1)}{\Pr(i\in s_2)}  =
\sum_{s_1\not\ni i} \mu(x_i, s_1) \frac{f(s_1) \Pr(i\notin s_2 \mid s_1)}{\Pr(i\notin s)} \notag \\
\quad\Leftrightarrow\quad 
\sum_{s_1\not\ni i} \mu(x_i, s_1) f(s_1) \frac{\pi_{2i}(s_1)}{\pi_i - \Pr(i\in s_1)}  =
\sum_{s_1\not\ni i} \mu(x_i, s_1) f(s_1) \frac{1- \pi_{2i}(s_1)}{1-\pi_i} \notag \\
\quad\Leftrightarrow\quad 
\sum_{s_1\not\ni i} \mu(x_i, s_1) f(s_1) \pi_{2i}(s_1) \frac{1 - \Pr(i\in s_1)}{\pi_i - \Pr(i\in s_1)} =
\sum_{s_1\not\ni i} \mu(x_i, s_1) f(s_1) . \label{eq:tmp}
\end{gather}
First, the equality \eqref{eq:tmp} holds for \emph{all possible} $\mu(x_i, s_1)$ if and only if
\[
\pi_{2i}(s_1) \frac{1 - \Pr(i\in s_1)}{\pi_i - \Pr(i\in s_1)} = 1 \quad\Leftrightarrow\quad \eqref{RT:all}
\]
Next, multiplying both the sides of \eqref{eq:tmp} by $\tfrac{\pi_i - \Pr(i\in s_1)}{\{1 - \Pr(i\in s_1)\}^2}$, the left-hand side yields
\[
\sum_{s_1\not\ni i} \mu(x_i, s_1) \pi_{2 i}(s_1) \frac{f(s_1)}{1 - \Pr(i\in s_1)} = E_{s_1}\big( \mu(x_i, s_1) \pi_{2 i} (s_1) \mid i \notin s_1 \big) 
\]
while the right-hand side can be written as
\begin{gather*}
\Big( \sum_{s_1\not\ni i} \mu(x_i, s_1) \tfrac{f(s_1)}{1 - \Pr(i\in s_1)} \Big) \Big( \tfrac{\Pr(i\in s_2)}{1 - \Pr(i\in s_1)} \Big) 
= E_{s_1}\big( \mu(x_i, s_1) \mid i \notin s_1 \big) \Big( \sum_{s_1\not\ni i} \pi_{i2}(s_1) \tfrac{f(s_1)}{1 - \Pr(i\in s_1)} \Big) \\
= E_{s_1}\big( \mu(x_i, s_1) \mid i \notin s_1 \big) E_{s_1}\big( \pi_{2i}(s_1) \mid i \notin s_1 \big) 
\end{gather*}
i.e. representative training of any given $\mu(x, s_1)$ if and only if \eqref{RT:given}, including when $\pi_{2i}(s_1)$ varies with $s_1$ and is not given by \eqref{RT:all}.
\end{proof}

\noindent
Corollary \ref{SRS-SRS}.
\begin{proof} Given sample sizes $n$ of $s$ and $n_1$ of $s_1$, we have $\mathcal{C}(N-1,n_1) =\frac{(N-1)!}{n_1! (N-n_1-1)!}$ distinct $s_1$ on either side of \eqref{RT}, and $\Pr(i\in s_2 \mid s_1) = \tfrac{n-n_1}{N-n_1} \mathbb{I}(i\notin s_1)$. 
\end{proof}

\noindent
Corollary \ref{Pois-Ber}.
\begin{proof} Let $\pi_i$ be the Poisson sampling probability of $i\in U$, and $\lambda$ that of Bernoulli sampling from $s$.  We have $f(s_1) = \prod_{j\in s_1} \lambda \pi_j \prod_{l\notin s_1} (1-\lambda \pi_l)$ and $\Pr(i\in s_1) = \lambda \pi_i$. Moreover, we have $\Pr(i\in s_2 \mid s_1) = \tfrac{\pi_i - \lambda\pi_i}{1-\lambda \pi_i} \mathbb{I}(i\notin s_1)$ for each $i\in U$. \end{proof}

\noindent
Proposition \ref{OOB}.
\begin{proof} Regarding the right-hand sides in \eqref{OOB-q} and \eqref{unbias-pq}, we have
\[
E_p\Big( \sum_{i\in s} \big( \tfrac{1}{\pi_i} -1 \big) y_i \Big) = \sum_{i\in U} y_i - E_p\big( \sum_{i\in s} y_i \big)
= E_p\big( \sum_{i\in R} y_i \big) .
\]
Meanwhile, regarding the left-hand sides in \eqref{OOB-q} and \eqref{unbias-pq}, we have
\begin{align*}
E_{pq}\Big( \sum_{i\in s} \big(\tfrac{1}{\pi_i} -1\big) \mu(x_i, s_1) \mid i \in s_2\Big) & = 
E_{pq}\Big( \sum_{i\in U} \tfrac{\mathbb{I}(i\in s)}{\pi_i} (1- \pi_i) \mu(x_i, s_1) \mid i \in s_2\Big) \\
& = \sum_{i\in U} E_{pq}\Big( (1- \pi_i) \mu(x_i, s_1) \mid i \in s_2, i\in s \Big) \\
& = \sum_{i\in U} (1- \pi_i) E_{pq}\big( \mu(x_i, s_1) \mid i \in s_2 \big) \\
& \overset{\eqref{RT}}{=} \sum_{i\in U} (1- \pi_i) E_{pq}\big( \mu(x_i, s_1) \mid i \notin s \big) \\
& = E_{pq}\big( \sum_{i\in R} \mu(x_i, s_1) \big) 
\end{align*}
given representative training $pq$-design. Thus, the condition \eqref{OOB-q} ensures \eqref{unbias-pq}. 
\end{proof}

\noindent
Proposition \ref{prop:OOB-tuned}.
\begin{proof} Since the total out-of-sample prediction bias of $\bar{\mu}(x_i, s)$ is given by the expectation of the difference between the two sides of  \eqref{OOB-q}, one can apportion $1/n_R$ of this difference to each $i\notin s$ as a $p$-unbiased correction. The expression \eqref{OOB-tuned}  follows, since
\[
E_q\Big( \sum_{i\in s_2} \tfrac{1}{q_{2i}} \big( \tfrac{1}{\pi_i} -1\big) \mu(x_i, s_1) \Big) =
\sum_{i\in s} \big( \tfrac{1}{\pi_i} -1\big) E_q\big( \mu(x_i, s_1) \mid i\in s_2\big) . \qedhere
\]
\end{proof}

\noindent
Lemma \ref{lem:2}.
\begin{proof} For the random classifier, we have
\begin{gather*}
\mathbb{I}\big( y_i = 1 = \mathring{y}_i(s)\big) = 
\begin{cases} 1 & \text{if}~ u_i\leq \mathring{\mu}(x_i, s) \\ 0 & \text{if}~ u_i> \mathring{\mu}(x_i, s) \end{cases} 
~\Rightarrow~ \mathring{\alpha}_i = E_{pq}\big( \mathring{\mu}(x_i, s) \mid i\notin s_1\big) \text{ if } y_i=1 \\
\mathbb{I}\big( y_i = 0 = \mathring{y}_i(s)\big) = 
\begin{cases} 1 & \text{if}~ u_i> \mathring{\mu}(x_i, s) \\ 0 & \text{if}~ u_i \leq \mathring{\mu}(x_i, s) \end{cases}
~\Rightarrow~ \mathring{\alpha}_i = 1- E_{pq}\big( \mathring{\mu}(x_i, s) \mid i\notin s_1\big) \text{ if } y_i=0 
\end{gather*}
such that $\mathring{\alpha}_i$ is fully determined given $y_i$, $x_i$ and $E_{pq}\big( \mathring{\mu}(x_i, s) \mid i\notin s_1 \big)$.

Whereas, for the deterministic classifier, we have 
\begin{gather*}
\mathbb{I}\big( y_i = 1 = \mathring{y}_i(s)\big) = 
\begin{cases} 1 & \text{if}~ \mathring{\mu}(x_i, s) \geq \psi \\ 0 & \text{otherwise} \end{cases} 
~\Rightarrow~ \mathring{\alpha}_i = \Pr\big( \mathring{\mu}(x_i, s) \geq \psi \mid i\notin s_1\big) \text{ if } y_i=1 \\
\mathbb{I}\big( y_i = 0 = \mathring{y}_i(s)\big) = 
\begin{cases} 1 & \text{if}~ \mathring{\mu}(x_i, s) < \psi \\ 0 & \text{otherwise} \end{cases}
~\Rightarrow~ \mathring{\alpha}_i = \Pr\big( \mathring{\mu}(x_i, s) < \psi \mid i\notin s_1\big) \text{ if } y_i=0
\end{gather*}
such that $\mathring{\alpha}_i$ depends on the variance of $\mathring{\mu}(x_i, s)$ as well as $E_{pq}\big( \mathring{\mu}(x_i, s) \mid i\notin s_1 \big)$.
\end{proof}

\noindent
Proposition \ref{OOB-cls}.
\begin{proof} We have 
\begin{align*}
E_{pq}\big( \mathring{\Phi}(R) \big) 
& = \sum_{i\in U} E_{pq}\Big( \mathbb{I}(i\notin s) \mathbb{I}\big( y_i = \mathring{y}_i(s)\big)  \Big) \\
& = \sum_{i\in U} (1- \pi_i) E_{pq}\Big(\mathbb{I}\big( y_i = \mathring{y}_i(s)\big) \mid i\notin s \Big) 
\overset{\eqref{M-pq}}{=} \sum_{i\in U} (1- \pi_i) \mathring{\alpha}_i \\
E_{pq}\big( \mathring{\Phi}_w(s) \big) & =
\sum_{i\in U} E_{pq}\big( \mathbb{I}(i\in s) \big) \big( \tfrac{1}{\pi_i} -1 \big) 
E_{pq}\Big(\mathbb{I}\big( y_i = \mathring{y}_i(s)\big) \mid i\in s_2, i\in s \Big) \\
& = \sum_{i\in U} (1 -\pi_i) E_{pq}\Big(\mathbb{I}\big( y_i = \mathring{y}_i(s)\big) \mid i\in s_2 \Big) 
\overset{\eqref{M-pq}}{=} \sum_{i\in U} (1- \pi_i) \mathring{\alpha}_i . \qedhere
\end{align*}
\end{proof}

\section{Details of Example \ref{ex:toy}} \label{app:toy}

Let $Y_R = (N-n) \bar{Y}_R$. Provided $n\geq 3$, we have $\eta(x_i, s) =0$ for any $i\neq 1, 2$, and
\[
\hat{Y}_R(s) - Y_R \overset{n\geq 3}{=} \begin{cases} 
-1 & \text{if $1\notin s$, \hspace{10mm} $\sum_{i\notin s} \eta(x_i, s) = 0$, prob. $\frac{N-n}{N}$} \\
0 & \text{if $1\in s, 2\in s$, $\sum_{i\notin s} \eta(x_i, s) = 0$, prob. $\frac{n(n-1)}{N(N-1)}$} \\
+1 & \text{if $1\in s, 2\notin s$, $\sum_{i\notin s} \eta(x_i, s) = 1$, prob. $\frac{N-n}{N} \frac{n}{N-1}$} . \end{cases}
\]
Thus, using $\eta(x, s)$ is $pq$-biased for estimating $Y_R$ except when $n = N-1$.

Meanwhile, let $\hat{Y}_R(s_1)$ be the prediction estimator of $Y_R$ using $\eta(x_i, s_1)$ for any $i\notin s$. Provided $n_1 \geq 3$, we have $\eta(x_i, s_1) =0$ for any $i\neq 1, 2$, and
\[
\hat{Y}_R(s_1) - Y_R \overset{n_1\geq 3}{=} \begin{cases} 
-1 & \text{if $1\notin s$, \hspace{11mm} $\sum_{i\notin s} \eta(x_i, s_1) = 0$, prob. $\frac{N-n}{N}$} \\
0 & \text{if $1\in s, 2\in s$,~ $\sum_{i\notin s} \eta(x_i, s_1) = 0$, prob. $\frac{n(n-1)}{N(N-1)}$} \\
+1 & \text{if $1\in s_1, 2\notin s$, $\sum_{i\notin s} \eta(x_i, s_1) = 1$, prob. $\frac{N-n}{N} \frac{n_1}{N-1}$} \\
0 & \text{if $1\in s_2, 2\notin s$, $\sum_{i\notin s} \eta(x_i, s_1) = 0$, prob. $\frac{N-n}{N} \frac{n_2}{N-1}$} . \end{cases}
\]
Thus, using the $\bar{\eta}(x, s)$ predictor also leads to $pq$-biased estimation generally.

For residual-tuning of $\eta(x, s_1)$ by $c_{\eta}(s_1) = \frac{1}{n} \sum_{i\in s} \eta(x_i, s_1) -\bar{y}_s$. We have
\[
c_{\eta}(s_1) \overset{n_1\geq 3}{=} \begin{cases} 
0 & \text{if $1\notin s$, \hspace{11mm} $\sum_{i\in s} \eta(x_i, s_1) =0$, $\sum_{i\in s} y_i = 0$, prob. $\frac{N-n}{N}$} \\
\frac{1}{n} & \text{if $1\in s_1, 2\in s$, $\sum_{i\in s} \eta(x_i, s_1) =2$, $\sum_{i\in s} y_i = 1$, prob. $\frac{n_1(n-1)}{N(N-1)}$} \\
-\frac{1}{n} & \text{if $1\in s_2, 2\in s$, $\sum_{i\in s} \eta(x_i, s_1) =0$, $\sum_{i\in s} y_i = 1$, prob. $\frac{n_2(n-1)}{N(N-1)}$} \\
0 & \text{if $1\in s_1, 2\notin s$, $\sum_{i\in s}\eta(x_i, s_1) =1$, $\sum_{i\in s} y_i =1$, prob. $\frac{N-n}{N} \frac{n_1}{N-1}$} \\
-\frac{1}{n} & \text{if $1\in s_2, 2\notin s$, $\sum_{i\in s} \eta(x_i, s_1) =0$, $\sum_{i\in s} y_i =1$, prob. $\frac{N-n}{N} \frac{n_2}{N-1}$} 
\end{cases}
\]
such that the SRB residual-tuning of $\bar{\eta}(x, s)$ is given by
\[
\bar{c}_{\eta}(s) \overset{n_1\geq 3}{=} \begin{cases} 
0 & \text{if $1\notin s$, \hspace{9.5mm} prob. $\frac{N-n}{N}$} \\
\frac{n_1 -n_2}{n^2} & \text{if $1\in s, 2\in s$, prob. $\frac{n(n-1)}{N(N-1)}$} \\
-\frac{n_2}{n^2} & \text{if $1\in s, 2\notin s$, prob. $\frac{N-n}{N} \frac{n}{N-1}$} .
\end{cases}
\]
Straightforward algebra shows that this does \emph{not} remove the bias, 
\[
\frac{1}{N-n} E_{pq}\big( \hat{Y}_R(s_1) - Y_R \big) - E_p\big( \bar{c}_{\eta}(s)\big) \neq 0 .
\]

For out-of-bag tuning by $\tau_{\eta}(s_1) = \frac{1}{n_2} \sum_{i\in s_2} \big( \eta(x_i, s_1) - y_i \big)$. We have
\[
\tau_{\eta}(s_1) \overset{n_1\geq 3}{=} \begin{cases} 
0 & \text{if $1\notin s$, \hspace{12.5mm} $\sum_{i\in s_2} \eta(x_i, s_1) =0$, $\sum_{i\in s_2} y_i = 0$, prob. $\frac{N-n}{N}$} \\
-\frac{1}{n_2} & \text{if $1\in s_2, 2\in s$,~ $\sum_{i\in s_2} \eta(x_i, s_1) =0$, $\sum_{i\in s_2} y_i = 1$, prob. $\propto \frac{n_2}{n}$} \\
0 & \text{if $1\in s_1, 2\in s_1$, $\sum_{i\in s_2} \eta(x_i, s_1) =0$, $\sum_{i\in s_2} y_i = 0$, prob. $\propto \frac{n_1(n_1-1)}{n(n-1)}$} \\
\frac{1}{n_2} & \text{if $1\in s_1, 2\in s_2$, $\sum_{i\in s_2} \eta(x_i, s_1) =1$, $\sum_{i\in s_2} y_i = 0$, prob. $\propto \frac{n_2n_1}{n(n-1)}$} \\
0 & \text{if $1\in s_1, 2\notin s$,~ $\sum_{i\in s_2}\eta(x_i, s_1) =0$, $\sum_{i\in s_2} y_i =0$, prob. $\frac{N-n}{N} \frac{n_1}{N-1}$} \\
-\frac{1}{n_2} & \text{if $1\in s_2, 2\notin s$,~ $\sum_{i\in s_2} \eta(x_i, s_1) =0$, $\sum_{i\in s_2} y_i =1$, prob. $\frac{N-n}{N} \frac{n_2}{N-1}$} .
\end{cases}
\]
such that the SRB out-of-bag tuning of $\bar{\eta}(x, s)$ is given by
\[
\bar{\tau}_{\eta}(s) \overset{n_1\geq 3}{=} \begin{cases} 
0 & \text{if $1\notin s$, \hspace{9.5mm} prob. $\frac{N-n}{N}$} \\
-\frac{n_2-1}{n(n-1)} & \text{if $1\in s, 2\in s$, prob. $\frac{n(n-1)}{N(N-1)}$} \\
-\frac{1}{n} & \text{if $1\in s, 2\notin s$, prob. $\frac{N-n}{N} \frac{n}{N-1}$} .
\end{cases}
\]
Straightforward algebra to shows that this does remove the bias, 
\[
\frac{1}{N-n} E_{pq}\big( \hat{Y}_R(s_1) - Y_R \big) - E_p\big( \bar{\tau}_{\eta}(s)\big) = 0 .
\]

\section{Design-based bias and MSE} \label{app:inference}

The derivation follows the approach of Zhang et al. (2025). 

\paragraph{Bias of SRB-kNN} The prediction error due to $\eta(x, s_1)$ can be written as
\[
B(s_1) = \frac{1}{N} \sum_{i\notin s} e_i(s_1) \quad\text{and}\quad e_i(s_1) = \eta(x_i, s_1) - y_i .
\]
Given $\pi_{2i} = \frac{n_2}{N-n_1}$ under SRS-SRS, an unbiased predictor based on $s_2$ is 
\[
\hat{B}(s_1) = \frac{N-n}{N n_2} \sum_{i\in s_2} e_i(s_1) \quad\text{where}\quad 
E_s\big( \hat{B}(s_1) - B(s_1) \mid s_1\big) = 0 .
\]
For the corresponding error of the SRB-kNN $\bar{\eta}(x, s)$, denoted by $\bar{B}(s)$, we have  
\begin{equation} \label{biasSRBkNN}
\hat{\bar{B}}(s) = \frac{N-n}{N n_2} E_q \Big( \sum_{i\in s_2} e_i(s_1)\Big) 
=\frac{N-n}{N n} \sum_{i\in s} \big( \dot{\eta}(x_i, s) - y_i \big) .
\end{equation}

\paragraph{Bias of residual-tuned SRB-kNN} The prediction error due to $\eta(x, s_1)$ is
\begin{align*}
B_c(s_1) & = \frac{1}{N} \sum_{i\notin s} e_i(s_1) + \frac{N-n}{N} \frac{1}{n} \sum_{i\in s} \big( y_i - \eta(x_i, s_1) \big) \\
& = \frac{1}{N} \sum_{i\notin s} e_i(s_1) - \Big( \frac{1}{n} - \frac{1}{N} \Big) \Big( \sum_{i\in s_1} e_i(s_1) + \sum_{i\in s_2} e_i(s_1) \Big) \\
& = \frac{N-n}{N} \bar{e}_{U\setminus s}(s_1) 
- \Big( \frac{1}{n} - \frac{1}{N} \Big) \Big( n_1 \bar{e}_{s_1}(s_1) + n_2 \bar{e}_{s_2}(s_1) \Big)
\end{align*}
where $\bar{e}_D(s_1)$ denotes the average of $e_i(s_1)$ over $i\in D$ given any set $D$. Conditional on $s_1$, an unbiased predictor of $B_c(s_1)$ follows as
\begin{align*}
\hat{B}_c(s_1) & = \Big( 1 - \frac{n}{N} \Big) \bar{e}_{s_2}(s_1)
- \Big( \frac{1}{n} - \frac{1}{N} \Big) \Big( n_1 \bar{e}_{s_1}(s_1) + n_2 \bar{e}_{s_2}(s_1) \Big) \\
& = n_1 \Big( \frac{1}{n} - \frac{1}{N} \Big) \Big( \bar{e}_{s_2}(s_1) - \bar{e}_{s_1}(s_1) \Big)
\end{align*}
such that an unbiased estimator of the bias of residual-tuned SRB-kNN is
\begin{equation} \label{biasResidTuned}
\hat{\bar{B}}_c(s) = E_q\big( \hat{B}_c(s_1) \mid s \big) .
\end{equation}

\paragraph{MSE of SRB-kNN} The error $B(s_1)$ has been given earlier. Conditional on $s_1$, an unbiased estimator of the conditional MSE of $\hat{P}(s_1)$ can be given as
\[
\mbox{mse}(s_1) = \hat{B}(s_1)^2 - \hat{V}_s\{ \hat{B}(s_1) \mid s_1\} + \hat{V}_s\{ B(s_1) \mid s_1\}
\]
(Theorem 1, Zhang et al., 2025), where
\begin{gather*}
\hat{V}_s\{ \hat{B}(s_1) \mid s_1\} = \Big( 1 - \frac{n}{N} \Big)^2 v_2 \\
v_2 = \hat{V}_s\big( \bar{e}_{s_2}(s_1)\mid s_1\big) 
= \Big( \frac{1}{n_2} - \frac{1}{N-n_1} \Big) \frac{1}{n_2 -1} \sum_{i\in s_2} \{ e_i(s_1) -  \bar{e}_{s_2}(s_1) \}^2 \\
\hat{V}_s\{ B(s_1) \mid s_1\} = \sum_{i\in s_2} \sum_{j\in s_2} \Big( 1 - \frac{\pi_{2i} \pi_{2j}}{\pi_{2ij}} \Big) e_i(s_1) e_j(s_1)
\end{gather*}
given $\pi_{2i} \equiv \frac{n_2}{N-n_1}$, and $\pi_{2ij} = \pi_{2i}$ if $i=j$ or $\pi_{2ij} =\frac{n_2(n_2-1)}{(N-n_1)(N-n_1-1)}$ if $i\neq j$. A design-unbiased estimator of the MSE of the corresponding $\hat{P}$ follows as
\begin{equation} \label{mseSRBkNN}
\overline{\mbox{mse}} = E_q\big( \mbox{mse}(s_1) \mid s \big) - V_q\big( \hat{P}(s_1) \mid s \big) .
\end{equation}

\paragraph{MSE of residual-tuned SRB-kNN} The error $B_c(s_1)$ is given earlier. Similarly to that of SRB-kNN, we have
\[
\mbox{mse}_c(s_1) = \hat{B}_c(s_1)^2 - \hat{V}_s\{ \hat{B}_c(s_1) \mid s_1\} + \hat{V}_s\{ B_c(s_1) \mid s_1\}
\]
where 
\[
\hat{V}_s\{ \hat{B}_c(s_1) \mid s_1\} = n_1^2 \Big( \frac{1}{n} - \frac{1}{N} \Big)^2 v_2 \qquad\text{and}\qquad
\hat{V}_s\{ B_c(s_1) \mid s_1\} = \Big( \frac{n_2}{n}\Big)^2v_2 .
\]
Notice that, regarding the variance of $B_c(s_1)$ conditional on $s_1$, we can simplify the random terms in $B_c(s_1)$ as 
\begin{align*}
\frac{N-n}{N} \bar{e}_{U\setminus s}(s_1) - \frac{N-n}{N} \frac{n_2}{n} \bar{e}_{s_2}(s_1)
& = \frac{N-n_1}{N} \bar{e}_{U\setminus s_1}(s_1) - \frac{n_2}{N} \bar{e}_{s_2}(s_1) - \frac{N-n}{n} \frac{n_2}{N} \bar{e}_{s_2}(s_1) \\
& = \frac{N-n_1}{N} \bar{e}_{U\setminus s_1}(s_1) - \frac{n_2}{n} \bar{e}_{s_2}(s_1)
\end{align*}
where $\bar{e}_{U\setminus s_1}(s_1)$ is a constant given $s_1$. A design-unbiased estimator of the MSE of the corresponding $\tilde{P}$ follows as
\begin{equation} \label{mseResidTuned}
\overline{\mbox{mse}}_c = E_q\big( \mbox{mse}_c(s_1) \mid s \big) - V_q\big( \hat{P}(s_1) \mid s \big) .
\end{equation}

\paragraph{MSE of OOB-tuned SRB-kNN} The error due to the tuned $\eta(x, s_1)$ is given as
\[
B_{\tau}(s_1) = \frac{1}{N} \sum_{i\notin s} e_i(s_1) - \frac{N-n}{N} \frac{1}{n_2} \sum_{i\in s_2} e_i(s_1) 
= \frac{N-n}{N} \big( \bar{e}_{U\setminus s}(s_1) - \bar{e}_{s_2}(s_1) \big)
\]
which has expectation $0$ conditional on $s_1$, since tuning has made it unbiased. It follows that we can use $\hat{B}_{\tau}(s_1) \equiv 0$, such that
$V_s\{ \hat{B}_{\tau}(s_1) \mid s_1\} \equiv 0$ as well. Regarding the variance of $B_{\tau}(s_1)$ conditional on $s_1$, we have now
\begin{align*}
\bar{e}_{U\setminus s}(s_1) - \bar{e}_{s_2}(s_1) 
& = \frac{N-n_1}{N-n} \bar{e}_{U\setminus s_1}(s_1) - \frac{n_2}{N-n} \bar{e}_{s_2}(s_1) - \bar{e}_{s_2}(s_1) \\
& = \frac{N-n_1}{N-n} \bar{e}_{U\setminus s_1}(s_1) - \frac{N-n_1}{N-n} \bar{e}_{s_2}(s_1)
\end{align*}
where $\bar{e}_{U\setminus s_1}(s_1)$ is a constant given $s_1$, such that 
\[
\mbox{mse}_{\tau}(s_1) = \hat{V}_s\{ B_{\tau}(s_1) \mid s_1\} = \Big( 1 - \frac{n_1}{N} \Big)^2 v_2 .
\]
A design-unbiased estimator of the MSE of the corresponding $\hat{P}$ follows as
\begin{equation} \label{mseOOBTuned}
\overline{\mbox{mse}}_{\tau} = E_q\big( \mbox{mse}_{\tau}(s_1) \mid s \big) - V_q\big( \hat{P}(s_1) \mid s \big) .
\end{equation}

\section{Monte Carlo SRB} \label{app:MC}

Exact calculation of $E_q(\cdot)$ or $V_q(\cdot)$ is infeasible if $\mathcal{C}(n, n_1)$ is too large. In such situations, the $q$-expectation and $q$-variance need to be evaluated by Monte Carlo based on $T$ (training, test) sets, denoted by $(s_1^{(t)}, s_2^{(t)})$ for $t =1, ..., T$.

For bias estimation by \eqref{biasSRBkNN}, one would calculate the $q$-expectation as 
\[
E_q\Big( \frac{1}{n_2} \sum_{i\in s_2} e_i(s_1)\Big) \approx \frac{1}{T} \sum_{t=1}^T \frac{1}{n_2} \sum_{i\in s_2^{(t)}} e_i(s_1^{(t)})
= \frac{1}{T} \sum_{t=1}^T \frac{1}{n_2} \sum_{i\in s_2^{(t)}} \big( \eta(x_i, s_1^{(t)}) - y_i\big) .
\]
Similarly for other $q$-expectations. 
For the $q$-variance in \eqref{mseSRBkNN}, one would use
\[
V_q\big( \hat{P}(s_1) \mid s \big) \approx \frac{1}{T-1} \Big( \sum_{t=1}^T \hat{P}(s_1^{(t)}) - \frac{1}{T} \sum_{t=1}^T \hat{P}(s_1^{(t)}) \Big)^2 .
\]
Similarly for the $q$-variance in \eqref{mseResidTuned} or \eqref{mseOOBTuned}.

Finally, as explained in Zhang et al. (2025, Section 2.3.2), one needs to estimate the MSE of the Monte Carlo SRB-predictor being implemented for out-of-sample prediction (instead of the exact SRB-predictor). Specifically, in terms of \eqref{mseSRBkNN} of SRB-kNN, one would use the unbiased estimator
\[
\tilde{\overline{\mbox{mse}}} = \frac{1}{T} \sum_{t=1}^T \Big( \mbox{mse}(s_1^{(t)}) 
- \Big\{ \hat{P}(s_1^{(t)}) - \frac{1}{T} \sum_{t=1}^T \hat{P}(s_1^{(t)}) \Big\}^2 \Big) .
\] 
Similarly for \eqref{mseResidTuned} of residual-tuned SRB-kNN, and \eqref{mseOOBTuned} of OOB-tuned SRB-kNN.


\begin{thebibliography}{999} 

\bibitem{angelopoulos}  Angelopoulos, A., Bates, S., Fannjiang, A., Jordan, M.I. and Zrnic, T. (2023). Prediction-powered inference. \textit{Science}, 382:669--674(2023). \url{DOI:10.1126/science.adi6000}

\bibitem{beaumont2022} Beaumont, J.-F. and Haziza, D. (2022). Statistical inference from finite population samples: A critical review of frequentist and Bayesian approaches. \textit{The Canadian Journal of Statistics}, 50:1186-1212.

\bibitem{blackwell1947} Blackwell, D. (1947). Conditional expectation and unbiased sequential estimation. \textit{Annals of Mathematical Statistics}, 18: 105-110.

\bibitem{breidt2017} Breidt, F. J. and Opsomer, J. D. (2017). Model-assisted survey estimation with modern prediction techniques. \textit{Statistical Science}, 32:190-205.

\bibitem{2001a} Breiman, L. (2001a).Statistical modelling: The two cultures. \textit{Statistical Science}, 16:199-231.

\bibitem{2001b} Breiman, L. (2001b). Random Forests. \textit{Machine Learning}, 45:5-32.

\bibitem{dagdoug2023} Dagdoug, M., Goga, C. and Haziza, D. (2021). Model-Assisted Estimation Through Random Forests in Finite Population Sampling. \textit{Journal of the American Statistical Association}, 118:1234-1251.

\bibitem{eurostat2017} European Commission(2017). \textit{European Statistics Code of Practice.} \url{https://ec.europa.eu/eurostat/web/quality/european-statistics-code-of-practice}

\bibitem{fisher1956}  Fisher, R.A. (1956). \textit{Statistical Methods and Scientific Inference}. Oliver and Boyd, Edinburgh and London.

\bibitem{hansen1987} Hansen, M. (1987). Some History and Reminiscences on Survey Sampling. \textit{Statistical Science}, 2:180-190.

\bibitem{horvitz1952survey} Horvitz, D. G. and Thompson, D. J. (1952). A generalization of sampling without replacement from a finite universe. \textit{Journal of the American Statistical Association}, 47:663-685.

\bibitem{kalton2002} Kalton, G. (2002). Models in practice of survey sampling. \textit{Journal of Official Statistics}, 18:129-154.

\bibitem{toth2019} McConville KS, and Toth D. (2019). Automated selection of post-strata using a model-assisted regression tree estimator. \textit{Scandinavian Journal of Statistics}, 46:389-413.

\bibitem{neyman1934} Neyman, J. (1934). On the Two Different Aspects of the Representative Method: The Method of Stratified Sampling and the Method of Purposive Selection. \textit{Journal of the Royal Statistical Society, pp. 558-625.}

\bibitem{rao1945} Rao, C. R. (1945). Information and accuracy attainable in the estimation of statistical parameters. \textit{Bulletin of Calcutta Mathematical Society}, 37:81-91.

\bibitem{rao2005} Rao, J. N. K. (2005). Interplay between sample survey theory and practice: An appraisal. \textit{Survey Methodology}, 31:117-138.

\bibitem{rao2011} Rao, J. N. K. (2011). Impact of frequentist and Bayesian methods on survey sampling practice: A selective appraisal. \textit{Statistical Science}, 26:240-256.

\bibitem{royall1970} Royall, R.M. (1970). On finite population sampling theory under certain linear regression models. \textit{Biometrika}, 57:377-387.

\bibitem{lcz2021} Sanguiao-Sande, L. and Zhang, L.-C. (2021). Design-Unbiased Statistical Learning in Survey Sampling. \textit{Sankhya A}, Centenary Issue in Honour of C. R. Rao, 83:714-744. 

\bibitem{statcan2017} Statistics Canada (2017). \textit{Quality Assurance Framework, 3rd edition.} \url{https://www150.statcan.gc.ca/n1/pub/12-539-x/12-539-x2019001-eng.htm}

\bibitem{sarndal1992} Särndal, C.-E., Swensson, B., and Wretman, J. (1992). \textit{Model-Assisted Survey Sampling}. Springer, New York.

\bibitem{smith1994} Smith, T. M. F. (1994). Sample surveys 1975–1990; an age of reconciliation? (with discussion). \textit{International Statistical Review}, 62:5-34.

\bibitem{ston1977} Stone, C.J. (1977). Consistent Nonparametric Regression. \textit{The Annals of Statistics}, 5:595-620.

\bibitem{UN2019} United Nations (2019). \textit{National Quality Assurance Frameworks Manual for Official Statistics.} \url{https://unstats.un.org/unsd/methodology/dataquality/}

\bibitem{valliant2000} Valliant, R., Dorfman, R. M., and Royall, R. M. (2000). \textit{Finite Population Sampling and Inference: A Prediction Approach}. Wiley, New York.

\bibitem{lcz2025} Zhang, L.-C., Sanguiao-Sande, L. and Lee, D. (2025). Design-based predictive inference. \textit{Journal of Official Statistics}, 41:404-432.


\end{thebibliography}
\end{document}